\begin{document}

\mainmatter  % start of an individual contribution

% first the title is needed
\title{Spectral Clustering with Imbalanced Data}

% a short form should be given in case it is too long for the running head
\titlerunning{Spectral Clustering with Imbalanced Data}

% the name(s) of the author(s) follow(s) next
%
% NB: Chinese authors should write their first names(s) in front of
% their surnames. This ensures that the names appear correctly in
% the running heads and the author index.
%
\author{Jing Qian
\and Venkatesh Saligrama}
\authorrunning{Jing Qian and Venkatesh Saligrama}
% (feature abused for this document to repeat the title also on left hand pages)

% the affiliations are given next; don't give your e-mail address
% unless you accept that it will be published
\institute{Boston University, Department of Electrical and Computer Engineering,\\
8 Saint Mary's Street, Boston, MA 02215, USA} %\\
% \mailsa\\
% \mailsb\\
% \mailsc\\
% \url{http://www.springer.com/lncs}}

%
% NB: a more complex sample for affiliations and the mapping to the
% corresponding authors can be found in the file "llncs.dem"
% (search for the string "\mainmatter" where a contribution starts).
% "llncs.dem" accompanies the document class "llncs.cls".
%

\toctitle{Lecture Notes in Computer Science}
\tocauthor{Authors' Instructions}
\maketitle

\begin{abstract}
Spectral clustering is sensitive to how graphs are constructed from data particularly when proximal and imbalanced clusters are present.
%Spectral clustering (SC) and graph-based semi-supervised learning (SSL) algorithms are sensitive to how graphs are constructed from data.
%In particular if the data has proximal and unbalanced clusters these algorithms can lead to poor performance on well-known graphs such as $k$-NN, full-RBF, $\epsilon$-graphs.
We show that Ratio-Cut (RCut) or normalized cut (NCut) objectives are not tailored to imbalanced data since they tend to emphasize cut sizes over cut values.
We propose a graph partitioning problem that seeks minimum cut partitions under minimum size constraints on partitions to deal with imbalanced data. Our approach parameterizes a family of graphs, by adaptively modulating node degrees on a fixed node set, to yield a set of parameter dependent cuts reflecting varying levels of imbalance. The solution to our problem is then obtained by optimizing over these parameters. We present rigorous limit cut analysis results to justify our approach. We demonstrate the superiority of our method through unsupervised and semi-supervised experiments on synthetic and real data sets.
\end{abstract}

%%%%%%%%%%%%%%%%%%%%%%%%%%%%%%%%%%%%%%%%%%
\section{Introduction}\label{sec:intro_motiv}
%%%%%%%%%%%%%%%%%%%%%%%%%%%%%%%%%%%%%%%%%%
Data with imbalanced clusters arises in many learning applications and has attracted much interest \cite{HeGarcia09}.
In this paper we focus on graph-based spectral methods for clustering and semi-supervised learning (SSL) tasks.
While model-based approaches \citep{Fraley02} may incorporate imbalancedness, they typically assume simple cluster shapes and need multiple restarts. In contrast non-parametric graph-based approaches do not have this issue and are able to capture complex shapes \citep{Ng01}.

In spectral methods, first a graph representing data is constructed and then spectral clustering(SC) \citep{Hagen92,Shi00} or SSL algorithms \citep{Zhu08,WanJebCha08} is applied on the resulting graph. Common graph construction methods include $\epsilon$-graph, fully-connected RBF-weighted(full-RBF) graph and $k$-nearest neighbor($k$-NN) graph. Of the three $k$-NN graphs appears to be most popular due to its relative robustness to outliers \citep{Zhu08,Luxburg07}.
Recently \citet{JebShc06} proposed $b$-matching graph which is supposed to eliminate some of the spurious edges of $k$-NN graph and lead to better performance.
%
%with size constraints the partitions are not necessarily low-density cuts, while our clustering goal here is to find natural partitions separated by density valleys -- clusters could be unbalanced but we do not know a priori how unbalanced they are.

%{\bf SRV: Briefly talk about related problems and related work. There are a few that have recognized unbalanced issues -- If you google unbalanced data clustering -- you will see a few refs. Also look at paper (Bin Yu and Belkin) I sent you. We need to point this out and say their goal is different -- it is really multi-clusterings.}

To the best of our knowledge, there do not exist systematic ways of adapting spectral methods to imbalanced data. We show that the poor performance of spectral methods on imbalanced data can be attributed to applying Ratio-Cut (RCut) or normalized cut (NCut) minimization objectives on traditional graphs, which sometimes tend to emphasize balanced partition size over small cut-values.

\noindent
{\bf Our Contributions:} \\
To deal with imbalanced data we propose partition constrained minimum cut problem (PCut). Size-constrained min-cut problems appear to be computationally intractable \citep{Galbiati11,Ji04}.
Instead we attempt to solve PCut on a parameterized family of cuts. To realize these cuts we parameterize a family of graphs over some parametric space $\lambda \in \Lambda$ and generate candidate cuts using spectral methods as a black-box. This requires a sufficiently rich graph parameterization capable of approximating varying degrees of imbalanced data. To this end we introduce a novel parameterization for graphs that involves adaptively modulating node degrees in varying proportions. We now solve PCut on a baseline graph over the candidate cuts generated on this parameterization. Fig.~\ref{f.framework} depicts our approach for binary clustering. Our limit cut analysis shows that our approach asymptotically does adapt to imbalanced and proximal clusters. We then demonstrate the superiority of our method through unsupervised clustering and semi-supervised learning experiments on synthetic and real data sets.
Note that we don't presume imbalancedness of the underlying density; our method significantly outperforms traditional approaches when the underlying clusters are imbalanced and proximal, while remaining competitive when they are balanced.

\begin{figure}[htbp]%{r}{.5\textwidth}
%\vspace{-20pt}
\begin{center}
\includegraphics[width=0.9\textwidth]{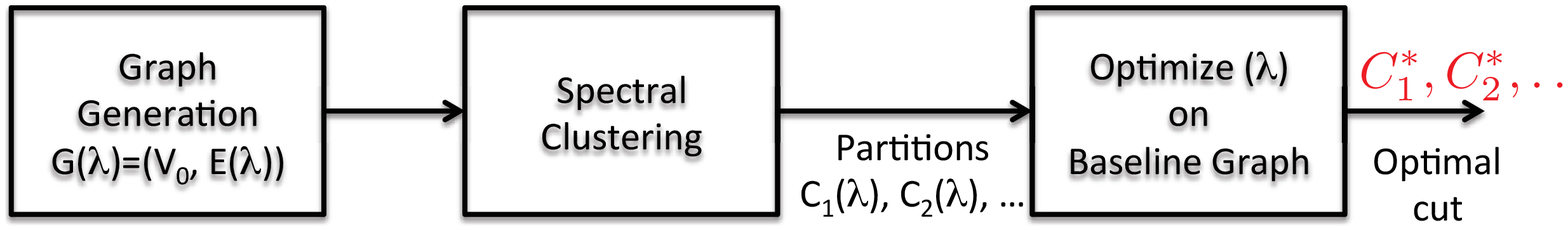}
\caption{\small Proposed Framework for Clustering on Imbalanced Data.}%\small Cut-ratio ($q$) vs unbalancedness ($y$). RCut value is smaller for balanced cuts than unbalanced low-density cuts whenever the cut-ratio is above the curve.}
\end{center}
\label{f.framework}
\vspace*{-15pt}
\end{figure}

%\begin{figure}[htbp]%{r}{.5\textwidth}
%\floatconts
%{f.framework}
%{ \caption{ Proposed Framework for Clustering on Imbalanced Data. } }
%{ \includegraphics[width=1\textwidth]{framework1.eps} }
%\end{figure}

\noindent
{\bf Related Work:} \\
Sensitivity of spectral methods to graph construction is well documented \citep{Luxburg07,Maier1,JebWanCha09}. \citet{Zelnik04} suggests an adaptive RBF parameter in full-RBF graphs to deal with imbalanced clusters. \citet{Nadler07} describes these drawbacks from a random walk perspective.
\citet{Buhler09,ShiBelkinYu09} also mention imbalanced clusters, but none of these works explicitly deal with imbalanced data. Besides, our approach is complementary to their schemes and can be used in conjunction.
Another related approach is size-constrained clustering \citep{ST97,Feige03,Andreev04,Hoppner08,ZhuWangLi10,Galbiati11}, which is shown to be NP-hard. \citet{Nagano11} proposes sub-modularity based schemes that work only for some special cases.
Besides, these works either impose exact cardinality constraints or upper bounds on the cluster sizes to look for balanced partitions.
While this is related, we seek minimum cuts with lower bounds on smallest-sized clusters.
Minimum cuts with lower bounds on cluster size naturally arises because we seek cuts at density valleys (accounted for by the min-cut objective) while rejecting singleton clusters and outliers (accounted for by cluster size constraint).
It is not hard to see that our problem is computationally no better than min-cut with upper bounds of size constraints. \footnote{In 2-way partition setting, min-cut with lower bounds is equivalent to min-cut with upper bounds, and is thus NP-hard. The multi-way partition problem generalizes 2-way setting. }

The organization of the paper is as follows.
In Section 2 we propose our partition constrainted min-cut (PCut) framework, illustrate some of the fundamental issues underlying poor performance of spectral methods on imbalanced data and explain how PCut can deal with it.
We describe the details of our PCut algorithm in Section 3, and explore the theoretical basis in Section 4.
In Section 5 we present experiments on synthetic and real data sets to show significant improvements in SC and SSL tasks.
Section 6 concludes the paper.
\section{Partition Constrained Min-Cut (PCut)}\label{sec:motiv}
%%%%%%%%%%%%%%%%%%%%%%%%%%%%%%%%%%%%%%%%%%
We first formalize PCut in the continuous setting. We assume that data is drawn from some unknown density $f(x)$, where $x \in \mathbb{R}^d$. %Let $S$ be a hypersurface partitioning $\mathbb{R}^d$:
We seek a hypersurface $S$ that partitions $\mathbb{R}^d$ into two subsets $D$ and $\bar D$ (with $D \cup \bar D=\mathbb{R}^d$) with non-trivial mass and passes through low-density regions:
%While there are many ways to formulate partitioning problems we formulate
%The goal is to find a hypersurface that passes through minimum density regions, namely,
\begin{equation} \label{e.optim}
S_0 = \mbox{arg}\min_{S} \int_S \psi(f(s)) ds,\,\,\, \mbox{subject to:} \,\,\,\min\{\mu(D),\mu(\bar D)\} \geq \delta > 0,
\end{equation}
where $\int_S$ stands for the $(d-1)$-dimensional integral, $\psi(\cdot)$ is some positive monotonic function, $\mu(A) = Prob\{x \in A\}$ is the probability measure, and $\delta$ is some positive constant. We describe imbalanced clusters as follows:
%Note that the optimal hypersurface $S_0$ may not necessarily lead to balanced partitions but that $S_0$ passes through ``density valleys.''

\begin{definition}
Data is said to be $\alpha$-imbalanced if $\alpha = \min\{\mu(D_0),\mu(\bar D_0)\} < 1/2$, where $(D_0,\bar D_0)$ is the optimal partitions obtained in Eq.(\ref{e.optim}).
\end{definition}

We next describe the problem in the finite data setting.  Let $G=(V,\,E,\,W)$ be a weighted undirected graph constructed from $n$ i.i.d. samples drawn from $f(x)$. Each node $v \in V$ is associated with a data sample. Edges are constructed using one of several graph construction techniques such as a $k$-NN graph. The weights on the edges are similarity measures such as RBF kernels that are based on Euclidean distances. % in some manner consistent with the underlying topology of the ambient space.
We denote by $S$ a cut that partitions $V$ into $C_S$ and $\bar C_S$. The cut-value associated with $S$ is:
\begin{eqnarray}\label{equ:cut}
Cut(C_S,\bar{C_S}) = \sum_{u\in C_S,v\in \bar{C_S},(u,v)\in E}w(u,v)
\end{eqnarray}
We pose the problem of partition size constrained minimum cut (PCut):
\begin{equation} \label{e.empopt}
\mbox{\bf PCut:} \,\,\,S_* = \mbox{arg} \min_{S} \left \{ Cut(C_S,\bar C_S) \mid \min\{|C_S|,\,|\bar C_S|\} \geq \delta |V| \right \} = S_*(C^*,\,\bar C^*).
\end{equation}

Eq.(\ref{e.empopt}) describes a binary partitioning problem but generalizes to arbitrary number of partitions. Note that without size constraints this problem is identical to min-cut criterion \citep{Stoer97}, which is well-known to be sensitive to outliers. This objective is closely related to the problem of graph partitioning with size constraints. Various versions of this problem are known to be NP-hard~\citep{Ji04}. Approximations to such partitioning problems have been developed~\citep{Andreev04} but appear to be overly conservative. More importantly these papers~\citep{Andreev04,Hoppner08,ZhuWangLi10} either focus on balanced partitions or cuts with exact size constraints. In contrast our objective here is to identify natural low-density cuts that are not too small(i.e. with lower bounds on smallest sized cluster).
We here employ SC as a black-box to generate candidate cuts on a suitably parameterized family of graphs. Eq.(\ref{e.empopt}) is then optimized over these candidate cuts.

%This formulation can deal with unbalanced clusters in this sense:
%\begin{prop}
%
%Partitions $C^*$ or  $\bar C^*$ cannot be themselves made up of \emph{disconnected} sub-clusters. To see this we suppose $G$ is a connected graph and $(C, \bar C)$ an arbitrary partition satisfying the $\delta$-size constraint. Suppose $C =  C_1 \cup C_2$ with $C_1 = \max \{|C_1|,\, |C_2| \} \geq \delta |V|$ and $C_1$ is disconnected from $C_2$ on the induced sub-graph associated with $C$. Then the modified partition $(C_1, \bar C \cup C_2)$ is preferred since it has a lower cut-value and satisfies the $\delta$-size constraint. Second the cut $S$ passes through a ``\emph{density valley},'' namely, $Cut(C_1, \bar C^* \cup C_2) \geq Cut(C^*, \bar C^*)$ for every $C_1, C_2$ such that $C^*=(C_1 \cup C_2)$ and $\max \{|C_1|,\, |C_2| \} \geq \delta |V|$. %partitions.
%\end{prop}
%

%
%
%
%We now focus on finite sample objective mirroring the continuous objective of Eq.(\ref{e.optim}).
%
%\end{remark}
% We assume that the cut $S_*$ results in $\left(C_*,\,\bar C_*\right)$.
%
%%%%%%%%%%%%%%%%%%%%%%%%%%%%%%%%%%%%%%%%%%
\subsection{RCut, NCut and PCut}\label{subsec:alg}
%%%%%%%%%%%%%%%%%%%%%%%%%%%%%%%%%%%%%%%%%%
%Up to our knowledge there does not seem to be any algorithms that directly solve Eq.(\ref{e.empopt}).
%Existing graph partitioning algorithms aim to minimize various objectives on the graph.
The well-known spectral clustering algorithms attempt to minimize RCut or NCut:
\begin{eqnarray}\label{equ:ratiocut}
    \min_S:\,\,Cut(C_S,\bar{C_S}) \left( \frac{size(V)}{size(C_S)} + \frac{size(V)}{size(\bar{C_S})} \right),
    %\left(\frac{1}{|C_S|}+\frac{1}{|\bar{C_S}|}\right),
 \end{eqnarray}
where $size(C)=|C|$ for RCut and $size(C)=\sum_{u\in C,v\in V}w(u,v)$ for NCut. Both objectives seek to trade-off low cut-values against cut size.
While robust to outliers, minimizing RCut(NCut) can lead to poor performance when data is imbalanced (i.e. with small $\alpha$ of Def.1).
To see this, we define cut-ratio $q \in [0,\,1]$, and imbalance coefficient $y \in [0,\,0.5]$, for some graph $G=(V,E,W)$:
$$
q = {Cut(C^*,\bar{C^*}) \over Cut(C_B,\bar{C_B})};\,\,\,y= \frac{\min \{size(C^*),\,size(\bar{C^*})\}}{size(C^*)+size(\bar{C^*})}.
$$
% where $size(C)=|C|$ for RCut and $size(C)=vol(C)$ for NCut.
where $(C^*,\bar C^*)$ corresponds to optimal PCut and $S_B(C_B,\bar C_B)$ is any balanced partition with $size(C_B)=size(\bar C_B)$. We analyze the limit-cut behavior of $k$-NN, $\epsilon$-graph and RBF graph to build intuition. For %$k$-NN, full-RBF and $\epsilon$-graph with
properly chosen $k_n$, $\sigma_n$ and $\epsilon_n$ \citep{Maier1,Narayanan06}, as sample size $n\rightarrow \infty$, we get:
%\vspace{-0.15in}
%{\small
%\begin{equation} \label{e.limval}
% q \stackrel{n \rightarrow \infty}{\longrightarrow} {\int_{S_0} f^{\gamma}(x)dx \over \int_{S_B} f^{\gamma}(x)dx},\,\,\,\,\, y \stackrel{n \rightarrow \infty}{\longrightarrow} \min \{\mu(D_0),\,\mu(\bar D_0)\}
\begin{equation} \label{e.lcutqy}
q \longrightarrow {\int_{S_0} f^{\gamma}(x)dx \over \int_{S_B} f^{\gamma}(x)dx},\,\,\,\,\, y \longrightarrow \min \{\mu(D_0),\,\mu(\bar D_0)\} = \alpha
\end{equation}
\noindent
where $\gamma<1$ for $k$-NN and $\gamma \in [1,2]$ for $\epsilon$-graph and full-RBF graphs. Asymptotically we can say:

{\bf (1)} While cut-ratio $q$ varies with graph construction, the imbalance coefficient $y$ is invariant. In particular we can expect $q$ for $k$-NN to be larger relative to $q$ for full-RBF and $\epsilon$-graph since $\gamma < 1$.

{\bf (2)} PCut appears to lead to similar results for all graph constructions. This %\footnote{\cite{Maier1} shows different behaviors for different graphs when there are two density valleys, in which case both valleys are meaningful results to Eq.(\ref{e.optim}).} and converges to the optimal solution in Eq.(\ref{e.optim}).
follows directly from limit-cut behavior and the limiting independence of $y$ to graph construction. %(see {\bf (1)}).

{\bf (3)} Optimal (limiting) RCut/NCut depends on graph construction. Indeed, for any partition $(C, \bar C)$ $$RCut(C,\bar C) = Cut(C, \bar C) \left ({1 \over |C|} +  {1 \over |\bar C|} \right ) \implies  {RCut(C^*,\bar C^*) \over RCut(C_B,\bar C_B)} = {q \over 4 y(1-y)}$$
A similar expression holds for NCut with appropriate modifications. Because $q$ varies for different graphs but $y$ does not, the ratio $(q/4y(1-y))$ depends on graph construction.
So it is plausible that for some constructions RCut (or NCut) value satisfies $q > 4y(1-y)$ while for others $q < 4y(1-y)$. In the former case RCut/NCut will favor a balanced cut over ``density valley'' cut $(C^*,\bar C^*)$ and vice versa if the latter is true. Fig.~\ref{fig:qy} depicts this point.
%Furthermore, since $y$ is invariant to graph construction as opposed to $q$ we see from {\bf (1)} that different graph constructions lead to different cuts when RCut/NCut is minimized
That this possibility is real is illustrated in Fig.~\ref{fig:2g_graph} for a Gaussian mixture. There RBF $k$-NN with large $\sigma$ favors balanced cut and does not for small $\sigma$. %while it does not favor balanced cuts for typically has larger $q$ values it is more likely to favor balanced cuts. Again Fig~\ref{fig:2g_graph} shows this behavior for RBF $k$-NN for large of RBF $\sigma$ values.

\begin{figure}[htbp]%{r}{.5\textwidth}
%\vspace{-20pt}
\begin{center}
\includegraphics[width=0.5\textwidth]{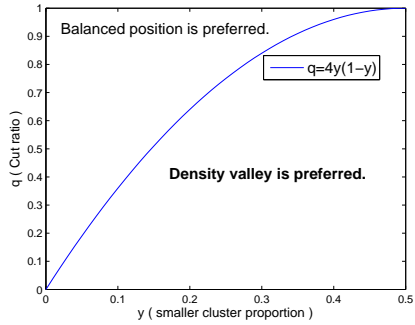}
\caption{\small Cut-ratio ($q$) vs imbalance ($y$). RCut value is smaller for balanced cuts than imbalanced low-density cuts for cut-ratios above the curve.}
\end{center}
\label{fig:qy}
%\vspace*{-15pt}
\end{figure}

%\begin{figure}[htbp]%{r}{.5\textwidth}
%\floatconts
%{fig:qy}
%{ \caption{ Cut-ratio ($q$) vs imbalance ($y$). RCut value is smaller for balanced cuts than imbalanced low-density cuts for cut-ratios above the curve. } }
%{ \includegraphics[width=.5\textwidth]{qy_plot.eps} }
%\end{figure}

{\bf (4)} We can loosely say that if data is imbalanced and sufficiently proximal (close clusters) then asymptotically $k$-NN, full-RBF and $\epsilon$-graph can all fail when RCut is minimized. To see this consider an imbalanced mixture of two Gaussians similar to Fig.~\ref{fig:2g_graph}. By suitably choosing the means and variances we can construct sufficiently proximal clusters with same imbalance but relatively large $q$ values. This is because $f(x)$ will be relatively large even at density valleys for proximal clusters. Our statement now follows from {\bf (3)}.
%
%\begin{wrapfigure}{r}{.45\textwidth}
%\centering
%\includegraphics[width=.45\textwidth]{qy_plot.eps}
%\caption{\small Cut-ratio ($q$) vs imbalance ($y$). RCut value is smaller for balanced cuts than imbalanced low-density cuts for cut-ratios above the curve.}
%\label{fig:qy}
%\end{wrapfigure}
%
%

In summary, we have learnt that optimal RCut/NCut depend on graph construction and can fail for imbalanced proximal clusters for $k$-NN, $\epsilon$-graph, full-RBF constructions on same data. PCut is computationally intractable but asymptotically invariant to graph construction and picks the right answer. Since SC is relaxed variant of optimal RCut/NCut we can expect it to have similar behavior relative to optimal RCut/NCut. Nevertheless, SC is computationally tractable. This motivates the following section.
%%%%%%%%%%%%%%%%%%%%%%%%%%%%%%%%%%%%%%%%%%
\subsection{Using Spectral Clustering for PCut} \label{subsec:sol}%How to handle imbalanced data with RCuts/NCuts?}
%%%%%%%%%%%%%%%%%%%%%%%%%%%%%%%%%%%%%%%%%%
%We expect cut-ratio $q$ to be large whenever proximal clusters are present since $f(x)$ is relatively large. So this implies that if the unbalancedness is small and clusters are proximal ratio-cut objective would fail. This point can also be seen from limit cut analysis \citep{Maier1,Narayanan06} of K-NN, $\epsilon$-neighborhood and RBF graphs.
%
 %Thus from Proposition~1 it follows that ratio-cut with these constructions would fail to cluster the unbalanced nodes.
Fortunately, limit-cut analysis does not explain the behavior of RCut/NCut. We refer to Fig.\ref{fig:2g_graph} for a different perspective. Here we plot RCut values as a function of cut positions. We would like the minimum value of RCut to be achieved at $x_1=1$ since this solves PCut. Note that RBF $k$-NN for different values of RBF parameters exhibits entirely different behaviors in (b). For larger $\sigma$ (green) we see RBF $k$-NN has a balanced cut while for small $\sigma$ (blue) balanced position is not a minima (although meaningless cuts at boundaries may result which violate the constraints in Eq.(\ref{e.optim})). A similar story emerges for full-RBF and $\epsilon$-graph. As an aside, which we explain later, our approach, Rank Modulated Degree (RMD) construction, has a minima (black) at $x_1=1$ and is robust to outliers at boundary regions.

\begin{figure*}[h]%[!tb]
\begin{centering}
\begin{minipage}[t]{.32\textwidth}
\includegraphics[width = 1.15\textwidth]{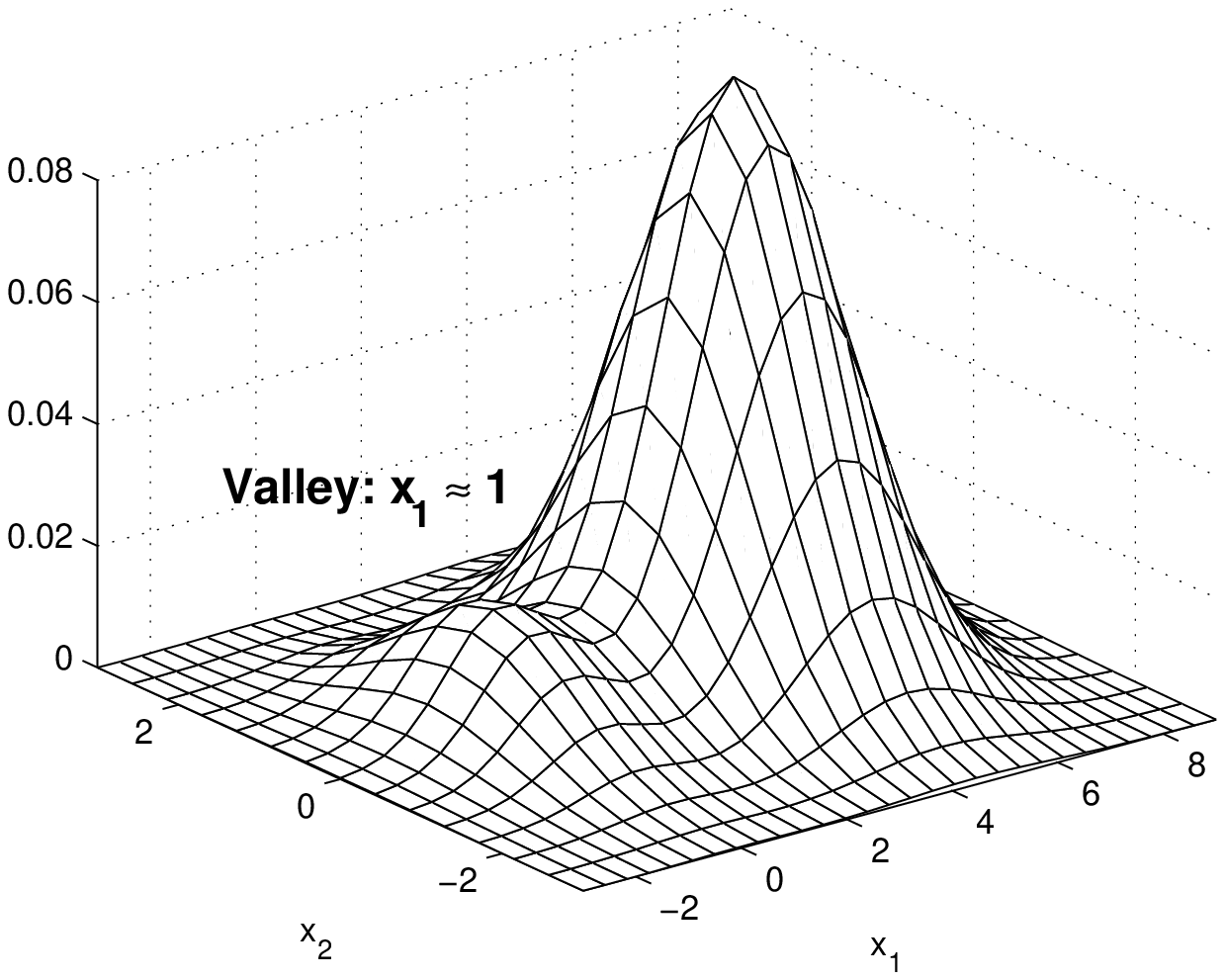}
\makebox[5 cm]{\small (a) Proximal Imbalanced Clusters}
\end{minipage}
\begin{minipage}[t]{.32\textwidth}
\includegraphics[width = 1.13\textwidth]{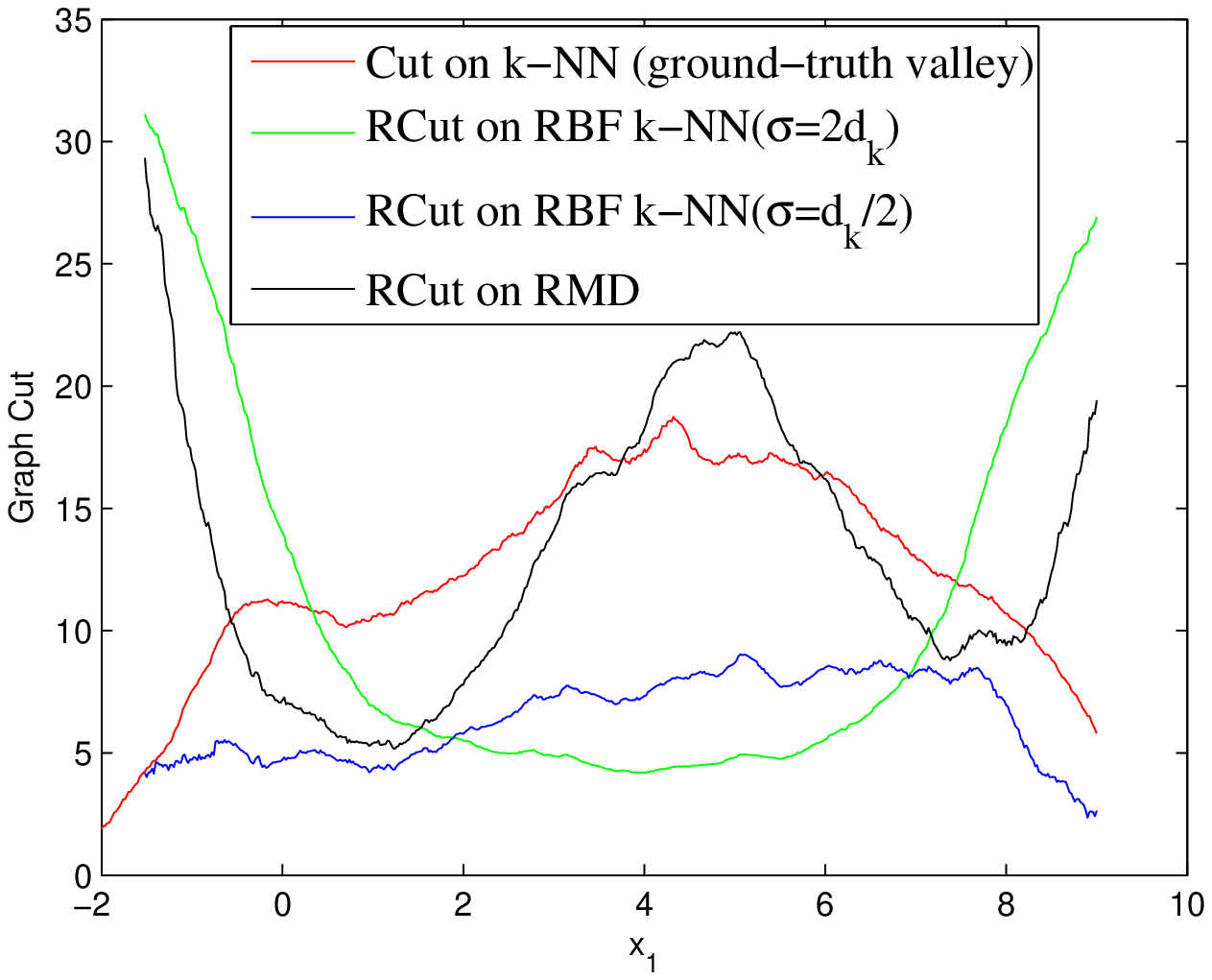}
\makebox[5 cm]{\small (b) $k$-NN and RMD }
\end{minipage}
\begin{minipage}[t]{.32\textwidth}
\includegraphics[width = 1.15\textwidth]{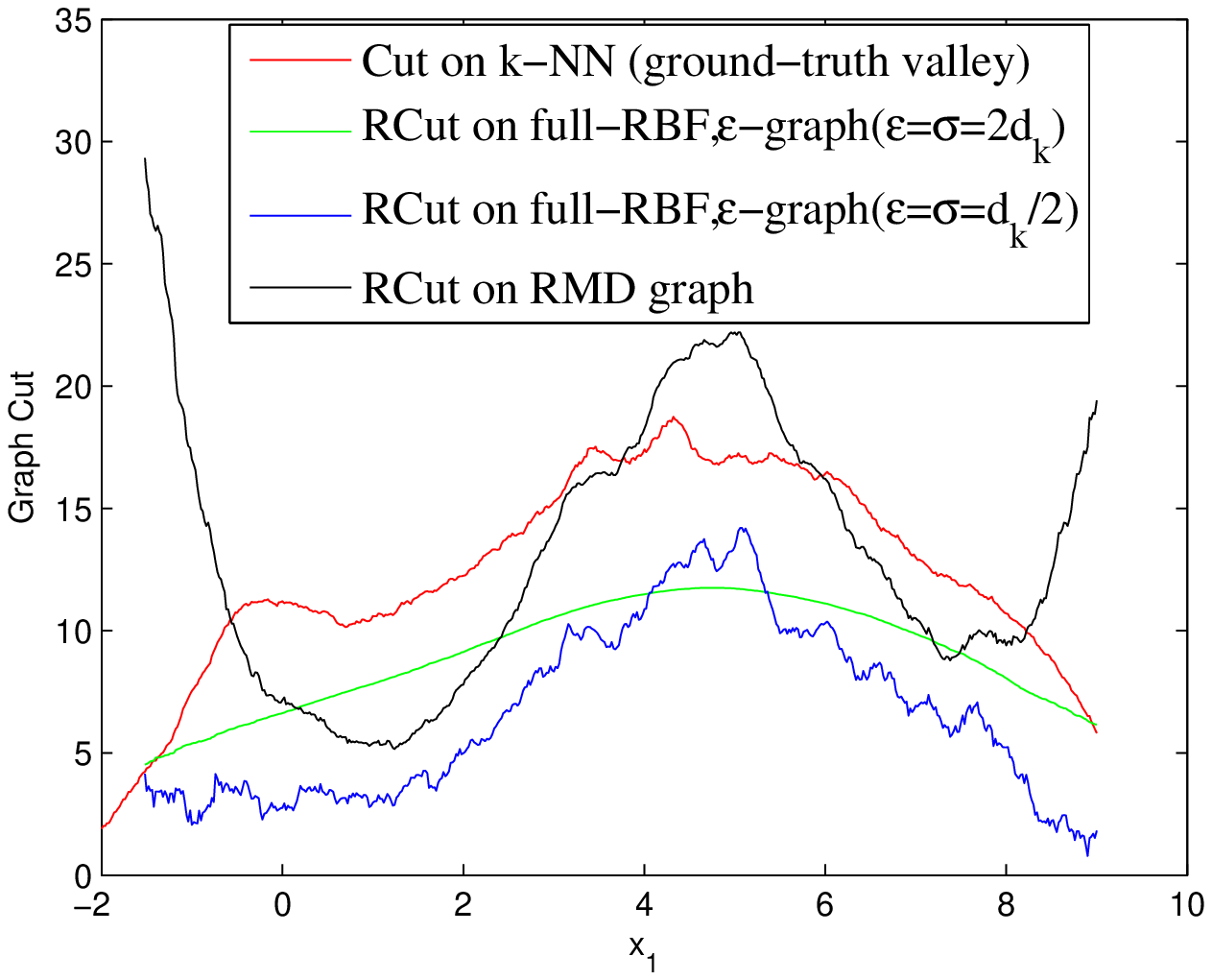}
\makebox[5 cm]{\small (c) $\epsilon$-RBF and RMD }
\end{minipage}
%\begin{minipage}[t]{.32\textwidth}
%\includegraphics[width = 1.15\textwidth]{2g_knn_new.eps}
%\makebox[5.5 cm]{\small (d) $k$-NN}
%\end{minipage}
%\begin{minipage}[t]{.32\textwidth}
%\includegraphics[width = 1.15\textwidth]{2g_full_new.eps}
%\makebox[5.5 cm]{\small (e) full-RBF and $\epsilon$-graph}
%\end{minipage}
%\begin{minipage}[t]{.32\textwidth}
%\includegraphics[width = 1.15\textwidth]{2g_rmd_new.eps}
%\makebox[5.5 cm]{\small (f) RMD(our method)}
%\end{minipage}
\caption{ Imbalanced mixture of two Gaussians with mixture proportions $0.85$ and $0.15$; means are $[4.5;0],\,\,[0;0]$; covariances $diag(2,1),\,I$. Optimal cut $S_0$ of Eq.~\ref{e.optim} is the hyperplane $x_1=1$ and a balanced cut is a line passing through $x_1=4$.
%Cut and RCut values for $k$-NN, $\epsilon$-graph, full-RBF and our graph(RMD) are plotted.
Figures in (b),(c) are averaged over 20 Monte Carlo runs with $n=1000$; $\sigma$ the RBF parameter, $d_k$ the average $k$-NN distance with $k=30$. Black, Blue \& Green curves are for RCut. Red curve is the cut-value. %In (c) full-RBF and $\epsilon$-graph have similar behaviors.
All curves are re-scaled for illustration. %demonstration. suitably normalized for the purpose of illustration.
%For (f) unweighted RMD graph with $l=30, \lambda=0.4$; for (d) unweighted $k$-NN; for (e) $\epsilon=\sigma=d_k$. Notice the ground-truth valley of Cut curve(red) in (b),(c). (b) shows large $\sigma$ in the $k$-NN graph results in smoothing of cut-values and the minimum RCut is not at the density valley. (c),(e) show smaller $\epsilon,\sigma$ have pronounced sensitivity to outliers (RCut curves (green/blue in (c)) go down near boundaries), while large $\epsilon,\sigma$ smoothen the RCut value.
}
\label{fig:2g_graph}
\end{centering}
% \vspace{-10pt}
\end{figure*}

\emph{The preceding example suggests that the position (or hyper surface) where Rcut/Ncut achieves its minimum value depends on the choice of graph parameter} as seen from the dramatic difference for RBF $k$ NN for different choices of $\sigma$. Notice in Fig.~\ref{fig:2g_graph}(b) the performance of RCut on RBF $k$-NN graph is sensitive to change in RBF parameter. This dramatic change when $\sigma$ is increased 4 fold can be explained through the cut-ratio. Large values of $\sigma$ tend to put equal weights on all the neighbors of each node. Small values of $\sigma$ tend to be non-uniform with some edges having small weights. Furthermore, for smaller values of $\sigma$ edges in highly dense regions tend to have uniformly larger weights while in low-density regions the edges tend to have smaller and non-uniform weights.

This discussion suggests the possibility of controlling cut-ratio $q$ through graph parameters while not impacting $y$ (since it is invariant to different $\sigma$ choices). This key insight leads us to the following idea for PCut:

\noindent
%\begin{itemize}
%\item [(A)]
{\bf (A) Parameter Optimization:}  Generate several candidate optimal RCuts/NCuts as a function of graph parameters. Pose PCut over these candidate cuts rather than arbitrary cuts as in Eq.(\ref{e.empopt}). Thus PCut is now parameterized over graph parameters.
%\item [(B)]

\noindent
{\bf (B) Graph Parameterization:} If the graph parameters are not sufficiently rich that allow for adaptation to imbalanced or proximal cuts ({\bf (A)}) would be useless. Therefore, we want graph parameterizations that allow sufficient flexibility so that the posed optimization problem is successful for a broad range of imbalanced and proximal data.
%\end{itemize}

%\noindent
%{\bf RBF $k$-NN graphs:}
%Let us first motivate the second objective. Notice in Fig.~\ref{fig:2g_graph}(b) the performance of RCut on RBF-$k$-NN graph is sensitive to change in RBF parameter. This dramatic change when RBF parameter, $\sigma$, is increased 4 fold can be explained through the cut-ratio. Large values of $\sigma$ tend to put equal weights on all the neighbors of each node. Small values of $\sigma$ tend to be non-uniform with some edges having small weights. Furthermore, for smaller values of $\sigma$ edges in highly dense regions tend to have uniformly larger weights while in low-density regions the edges tend to have smaller and non-uniform weights. Consequently, for the same imbalance, $y$, the cut-ratio $q$ is typically smaller for smaller $\sigma$ and balanced cuts are not preferred.

We first consider the second objective. We have found in our experiments (see Sec. 5) that the parametrization based on RBF $k$-NN graphs is not sufficiently rich to account for varying levels of imbalanced and proximal data. To induce even more flexibility we introduce a new parameterization:

\noindent
{\bf Rank Modulated Degree (RMD) Graphs:} \\
We introduce RMD graphs that are a richer parameterization of graphs that allow for more control over $q$ and offers sufficient flexibility in dealing with a wide range of imbalanced and proximal data. We use the insight gained above about what happens to cut ratio as $\sigma$ is varied in RBF-$k$-NN in the low-density and high-density regions.   We consider two equivalent ways of accomplishing this task:

\noindent ({\bf I}) Adaptively modulate the node-degree on a baseline $k$-NN graph.

\noindent ({\bf II}) Modulate the neighborhood size for each node on a baseline $\epsilon$-graph.

%\begin{figure}[h]%{r}{.5\textwidth}
%%\vspace{-20pt}
%\begin{center}
%\includegraphics[width=0.8\textwidth]{framework1.eps}
%\caption{\small Proposed Optimization Framework for Clustering.}%\small Cut-ratio ($q$) vs unbalancedness ($y$). RCut value is smaller for balanced cuts than unbalanced low-density cuts whenever the cut-ratio is above the curve.}
%\end{center}
%\label{f.framework}
%\vspace*{-15pt}
%\end{figure}
%
%Both strategies are somewhat equivalent.
We adopt ({\bf I}) since we can easily ensure graph connectivity. \emph{Our idea is a parameterization that selectively removes edges in low density regions and adds edges in high-density regions}. This modulation scheme is based on rankings of data samples, which reflect the relative density. Our RMD scheme can adapt to varying levels of imbalanced data because we can reduce $q$ through modulation while keeping $y$ fixed. The main remaining issue is to reliably identify high/low density nodes for which we use a novel ranking scheme.

We are now left to pose PCut over graph parameters or candidate cuts. We describe this in detail in the following section. We construct a universal baseline graph for the purpose of comparison among different cuts and to pick the cut that solves Eq.(\ref{e.empopt}). These different cuts are obtained by means of SC and are parameterized by graph construction parameters. PCut is now solved on the baseline graph over candidate cuts realized from SC.
%The optimization problem on the baseline graph will select it.
Fig.\ref{f.framework} illustrates our framework for clustering.

%%%%%%%%%%%%%%%%%%%%%%%%%%%%%%%%%%%%%%%%%%
\section{Our Algorithm}\label{sec:RMD_idea}
%%%%%%%%%%%%%%%%%%%%%%%%%%%%%%%%%%%%%%%%%%
%We propose to modulate the node degrees through a parameterized scheme, which is based on ranking of data samples. We call the resulting graph the Rank-Modulated Degree(RMD) graph. Our graph is able to find the valley cut while being robust to outliers. Moreover we provide a model selection step making our graph adaptable to data with varying levels of unbalancedness.
Given $n$ data samples, our task is unsupervised clustering or SSL, assuming the number of clusters/classes $K$ is known. We start with a baseline $k_0$-NN graph $G_0=(V,E_0)$ built on these samples with $k_0$ large enough to ensure graph connectivity.
%Each node $v \in V$ is associated with a data sample $x_v$ and the graph is formed by connecting the $k_0$ nearest neighbors with respect to the Euclidean distance between data samples associated the nodes. We choose a sufficiently large $k_0$ to ensure graph connectivity.
Main steps of our PCut framework are as follows.
% where $W_0$ is the RBF similarity for each edge with parameter $\sigma$
%Our RMD graph-based learning framework has the following steps:
%

\begin{table*}[h]
% \caption{Error rate performance of GRF and GTAM for imbalanced real data sets. Our method performs significantly better than other methods.}
%\begin{center}
\begin{tabular}{l}
  \hline
  \textbf{ Main Algorithm:\, RMD Graph-based PCut } \\
  \hline
  1. Compute the rank $R(x_i)$ of each sample $x_i,i=1,...,n$; \\
  2. For different configurations of parameters, \\
    \,\,\, a. Construct the parametric RMD graph; \\
    \,\,\, b. Apply spectral methods to obtain a $K$-partition on the current RMD graph;  \\
  3. Among various partition results, pick the ``best" (evaluated on baseline $G_0$). \\
  \hline
\end{tabular}
%\end{center}
\label{tab:main_alg}
\end{table*}

\noindent
{\bf (1) Rank Computation:} \\
We compute the rank $R(v)$ of every node $v$ as follows:
\begin{eqnarray}\label{eq:grank}
  R(x_v) = \frac{1}{n}\sum_{w \in V} \mathbb{I}_{\{\eta(x_v)\leq \eta(x_w)\}}
\end{eqnarray}
where $\mathbb{I}$ denotes the indicator function, and $\eta(x_v)$ is some statistic reflecting the relative density at node $v$. Since $f$ is unknown, we choose average nearest neighbor distance as a surrogate for $\eta$. To this end let $N(v)$ be the set of all neighbors for node $v \in V$ on the baseline graph, and we let: %need to employ some surrogate statistic. While many choices are possible we use average nearest-neighbor distances in this paper.
\begin{eqnarray}\label{eq:grank}
  \eta(x_v)=\frac{1}{|N(v)|}\sum_{w \in N(v)} \|x_v - x_w\|.
\end{eqnarray}
The ranks $R(x_v) \in [0,\,1]$, are relative orderings of samples and are uniformly distributed.
$R(x_v)$ indicates whether a node $v$ lies near density valleys or high-density areas, as shown in Fig.\ref{f.pdf_rank}.
% The ranks  and are found to be insensitive to the choice of the baseline graph parameter $k_0$.

\begin{figure}[htbp]%{r}{.5\textwidth}
%\vspace{-20pt}
\begin{center}
\includegraphics[width=0.5\textwidth]{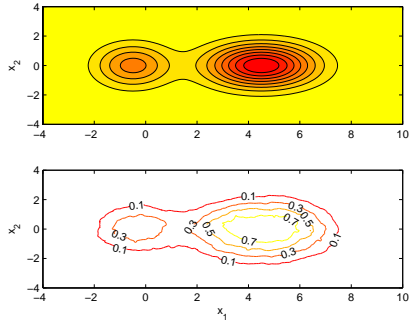}
\caption{\small Density level sets \& rank estimates.}
\end{center}
\label{f.pdf_rank}
%\vspace*{-15pt}
\end{figure}

%\begin{figure}[htbp]%{r}{.5\textwidth}
%\floatconts
%{f.pdf_rank}
%{ \caption{ Density level sets \& rank estimates. } }
%{ \includegraphics[width=0.45\textwidth]{gauss2D_pdf_rank.eps} }
%\end{figure}
%\begin{wrapfigure}{r}{.5\textwidth}
%\vspace{-20pt}
%\centering
%\includegraphics[width=.33\textwidth]{gauss2D_pdf_rank.eps}
%\vspace{-10pt}
%\caption{\small Density level sets \& rank estimates.}
%\label{fig:rwcont}
%\vspace{-10pt}
%\end{wrapfigure}

%Such statistics have been employed for high-dimensional anomaly detection \citep{Zhao09,Zhao12}. Details are described in Sec.\ref{subsec:rank}. The rank is a normalized ordering of all points based on $G$, ranges in $[0,1]$, and indicates how extreme $x$ is among all points.

\noindent
{\bf (2) Parameterized family of graphs:} \\
We consider three parameters, $\lambda \in [0,\,1]$, $k$ for $k$-NN and $\sigma$ for RBF similarity. These are then suitably discretized. We generate a weighted graph $G(\lambda,k,\sigma)=(V, E(\lambda,k,\sigma),W(\lambda,k,\sigma))$ on the same node set as the baseline graph but with different edge sets. For each node $v \in V$ we construct edges with $k_\lambda(v)$ nearest neighbors: %The degree deg($x$) for node $x$ is modulated as follows:
\begin{eqnarray}\label{eq:degree}
  k_{\lambda}(v) = k(\lambda+2(1-\lambda)R(x_v)),
\end{eqnarray}
%where $k$ is another parameter.
%where $\lambda \in (0,1]$ parameterizes the family of RMD graphs.
%$k$ is the average degree. We discretize $\lambda$ in $\{0.2,0.4,0.6,0.8,1\}$ in experiments. It is not difficult to see that $R(x)$ converges (in distribution) to a uniform measure on the unit interval regardless of the underlying density $f(\cdot)$.
%Note that since $R(x_v)$ is uniformly distributed, the average degree across all samples is $k$. %The weights on the edges are RBF weights with parameter $\sigma$.
%The minimum degree $\lambda k$ can be used to ensure a connected graph when necessary.
%Note that we also vary $k$,$\sigma$ in our experiments for a more thorough demonstration.
%Furthermore, the above modulation scheme can be thought of as modulating the degree of each node around a nominal value equal to $k$. The remaining issue is to optimize over the scalar parameter $\lambda$, which is described in Step (4). %We choose $\lambda$ through an optimization step (see below Step (4)).
This generates RMD parameterization. For other parameterizations such as RBF $k$-NN we let $\lambda=1$ and vary only $k,\,\sigma$.

\noindent
{\bf (3) Parameterized family of cuts:} \\
From $G(\lambda,k,\sigma)$ we generate a family of K partitions $C_1(\lambda,k,\sigma), C_2(\lambda,k,\sigma), \ldots, C_K(\lambda,k,\sigma)$. These cuts are generated based on the eventual learning objective. For instance, if K-clustering is the eventual goal these K-cuts are generated using SC. For SSL we use RCut-based Gaussian Random Fields(GRF) and NCut-based Graph Transduction via Alternating Minimization(GTAM) to generate cuts. These algorithms all involve minimizing RCut(NCut) as the main objective (SC) or some smoothness regularizer (GRF,GTAM). For details about these algorithms readers are referred to references \citep{Zhu08,WanJebCha08,Luxburg07,Chung96}.

\noindent
{\bf (4) Parameter Optimization:} \\
The final step is to solve Eq.(\ref{e.empopt}) on the baseline graph $G_0$.
We assume prior knowledge that the smallest cluster is at least of size $\delta n$.
The $K$-partitions obtained from step (3) are now parameterized: $\left(C_1(\lambda,k,\sigma),...,C_K(\lambda,k,\sigma)\right)$.
We optimize over these parameters to obtain the minimum cut partition (lowest density valley) on $G_0$.
%such that the smallest cluster is no smaller than some threshold $\delta$:
\begin{eqnarray}\label{eq:selection}
% \nonumber to remove numbering (before each equation)
  & \min_{\lambda,k,\sigma}\{Cut_0\left(C_1,...,C_K\right)=\sum^K_{i=1}Cut_0(C_i,\bar{C}_i)\} \\
\nonumber
  & s.t. ~~\min\{|C_1(\lambda,k,\sigma)|,...,|C_K(\lambda,k,\sigma)|\}\geq \delta n
\end{eqnarray}
%$\delta$ sets the threshold of minimum cluster size, i.e.
$Cut_0(\cdot)$ denotes evaluating cut values on the baseline graph $G_0$. Partitions with clusters smaller than $\delta n$ are discarded.
%$Cut_0\left(\cdot\right)$ represents the Cut values of different partitions are evaluated on a same reference $k_0$-NN graph to pick the min-cut partition.
%{\bf This step exactly aims at the optimal criterion of Eq.(\ref{e.empopt})}.
%Note that whatever RCut/NCut is used, for the above size constraint we just consider the number of points within the clusters.
% Also notice that for a fair comparison, the Cut values of different partitions are computed on the same graph to choose a min-cut partition.

%\noindent
%\begin{tabular}{lll}
%  \hline
%  % after \\: \hline or \cline{col1-col2} \cline{col3-col4} ...
%  \noindent\textbf{Algorithm 1: RMD Graph-based Learning:} \\
%{\bf Input}: $n$ data samples $\{x_1,\ldots,x_n\}$ (partially labeled for SSL), number of clusters/classes $K$, \\
%            smallest cluster/class size threshold $\delta$.\\
%{\bf Steps}:\\
%  1. Compute ranks of samples based on Eq.(\ref{eq:grank}). \\
%  2. For different $\lambda,k,\sigma$, do: \\
%  \indent a. Construct the RMD graph based on Eq.(\ref{eq:degree}); \\
%  \indent b. Apply graph-based learning algorithms on the current RMD graph to get $K$ clusters. \\
%  3. Compute Cut values of different partitions from step 2 on the $k_0$-NN graph. Pick the partition \\
%  with the smallest Cut value based on Eq.(\ref{eq:selection}).\\
%{\bf Output}: the selected $K$-partition. \\
%  \hline
%\end{tabular}

\noindent
{\bf Remark:} \\
1. Although step (4) suggests a grid search over several parameters, it turns out that other parameters such as $k,\sigma$ do not play an important role as $\lambda$. Indeed, the experiment section will show that while step (4) can select appropriate $k,\sigma$, it is by searching over $\lambda$ that adapts spectral methods to data with varying levels of imbalancedness (also see Thm.\ref{part2}).\\
2. Our framework uses existing spectral algorithms and so can be combined with other graph-based partitioning algorithms to improve performance for imbalanced data, such as 1-spectral clustering, sparsest cut or minimizing conductance \citep{Buhler09,Hein10,Szlam10,Arora09}.

\section{Analysis}\label{sec:thm}
%%%%%%%%%%%%%%%%%%%%%%%%%%%%%%%%%%%%%%%%%%
%
Our asymptotic analysis shows how RMD helps control of cut-ratio $q$ introduced in Sec.~2.
Assume the data set $\{x_1,\ldots,x_n\}$  is drawn i.i.d. from an underlying density $f$ in $\mathbb{R}^d$. Let $G=(V,E)$ be the unweighted RMD graph. Given a separating hyperplane $S$, denote $C^+$,$C^-$ as two subsets of $C$ split by $S$, $\eta_d$ the volume of unit ball in $\mathbb{R}^d$. Assume the density $f$ satisfies:

\noindent
\textbf{Regularity conditions:} $f(\cdot)$ has a compact support, and is continuous and bounded: $f_{max} \geq f(x) \geq f_{min}>0$. It is smooth, i.e. $||\nabla f(x)||\leq\lambda$, where $\nabla f(x)$ is the gradient of $f(\cdot)$ at $x$. There is no flat regions, i.e. $\forall \sigma>0$, $\mathcal{P}\left\{y: |f(y)-f(x)|<\sigma\right\}\leq M\sigma$ for all $x$ in the support, where $M$ is a constant.

First we show the asymptotic consistency of the rank $R(y)$ at some point $y$. The limit of $R(y)$ is $p(y)$, which is the complement of the volume of the level set containing $y$. Note that $p$ exactly follows the shape of $f$, and always ranges in $[0,1]$ no matter how $f$ scales.
\begin{theorem}\label{rank-pvalue}
Assume $f(x)$ satisfies the above regularity conditions. As $n\rightarrow\infty$, we have
\begin{equation}
    R(y)\rightarrow p(y):= \int_{\left\{x:f(x)\leq
f(y)\right\}}f(x)dx.
\end{equation}
\end{theorem}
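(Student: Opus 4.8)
The plan is to exploit the fact that the neighborhood statistic $\eta(x_v)$ --- the average distance from $x_v$ to its neighbors on the baseline $k_0$-NN graph --- is a density surrogate: it is small where $f$ is large and large where $f$ is small, and in the limit it becomes a deterministic, strictly decreasing function of $f(x_v)$. Granting this, $R(y)$, which is the fraction of samples $x_w$ with $\eta(x_w)\ge\eta(y)$ (those lying in regions at least as sparse as the neighborhood of $y$), is asymptotically the fraction with $f(x_w)\le f(y)$, and the latter converges to $\mathcal{P}\{f(X)\le f(y)\}=\int_{\{x:f(x)\le f(y)\}}f(x)\,dx=p(y)$ by the law of large numbers. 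I would make this precise in three steps.

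\emph{Step 1 (concentration of $\eta$).} Fix $x$ in the support. The set $N(x)$ has size $\Theta(k_0)$ and its members are among the samples nearest to $x$, so $\eta(x)$ is governed by the $k_0$-NN radius $r_{k_0}(x)$ (equal to it up to the dimensional factor $d/(d+1)$ in the limit). A Chernoff bound on the binomial count of samples landing in the ball $B(x,r)$, together with the regularity conditions --- so the $f$-mass of $B(x,r)$ is $\eta_d\,r^d f(x)(1+o(1))$ as $r\to 0$, the correction controlled by $\|\nabla f\|\le\lambda$ --- shows that, when $k_0=k_0(n)$ satisfies $k_0\to\infty$ and $k_0/n\to 0$, $\eta(x)$ concentrates with exponentially small failure probability around the deterministic value $t_n(x)=\tfrac{d}{d+1}\big(k_0/(n\,\eta_d\,f(x))\big)^{1/d}(1+o(1))$, which is strictly decreasing in $f(x)$. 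A union bound over a sufficiently fine net of the support (which is why $k_0/\log n\to\infty$ is needed) upgrades this to a uniform statement: with probability tending to one, $|\eta(x)/t_n(x)-1|\le\epsilon_n$ for all $x$ in the support simultaneously, with $\epsilon_n\to 0$.

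\emph{Step 2 (correct ordering off a thin band) and Step 3 (conclusion).} Since $t_n$ is monotone in $f$, Step 1 yields a sequence $\sigma_n\to 0$ (of order $\epsilon_n$ after passing through the monotone relation) such that, on the high-probability event, $\mathbb{I}_{\{\eta(y)\le\eta(x_w)\}}=\mathbb{I}_{\{f(x_w)\le f(y)\}}$ for every sample $x_w$ with $|f(x_w)-f(y)|>\sigma_n$; the two indicators can disagree only on the band $B_n=\{z:|f(z)-f(y)|\le\sigma_n\}$, whose mass the ``no flat regions'' assumption bounds by $\mathcal{P}(B_n)\le 2M\sigma_n$, so by Hoeffding's inequality the empirical fraction of samples in $B_n$ is $\le 2M\sigma_n+o(1)$ with probability tending to one. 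Hence, with probability tending to one,
$$\Big|R(y)-\tfrac1n\sum_{w}\mathbb{I}_{\{f(x_w)\le f(y)\}}\Big|\;\le\;\tfrac1n\sum_{w}\mathbb{I}_{\{x_w\in B_n\}}\;\longrightarrow\;0,$$
while $\tfrac1n\sum_{w}\mathbb{I}_{\{f(x_w)\le f(y)\}}\to\mathcal{P}\{f(X)\le f(y)\}=p(y)$ by the strong law; the level-set boundary $\{f=f(y)\}$ carries no mass (let $\sigma\to 0$ in the no-flat-regions condition), so nothing obstructs the limit, and summability of the exponentially small failure probabilities upgrades the convergence to almost sure via Borel--Cantelli. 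Thus $R(y)\to p(y)$.

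The hard part will be Step 1: obtaining concentration of the $k_0$-NN distance that is simultaneously uniform over the support \emph{and} sharp enough that the deterministic surrogate $t_n(x)$ is genuinely monotone in $f(x)$ despite the bias from $f$ varying over balls of radius $\sim (k_0/n)^{1/d}$. Lining up the rates --- $\epsilon_n$ small compared with the band width $\sigma_n$, and $\sigma_n\to 0$ --- is precisely what forces the standard conditions $k_0/\log n\to\infty$ and $k_0/n\to 0$, and is where the smoothness and no-flat-regions hypotheses are really used.
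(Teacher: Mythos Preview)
Your argument is correct, but it is organized quite differently from the paper's. The paper first rewrites the rank using a sample-splitting device: the data are partitioned into $m_1$ disjoint blocks $D_1,\ldots,D_{m_1}$ (plus a block $D_0$ of $m_1$ query points), and $R(u)$ is expressed as $\tfrac{1}{m_1}\sum_{j}\mathbb{I}_{\{\eta(x_j;D_j)>\eta(u;D_j)\}}$, so that the summands are mutually independent. Concentration of $R(u)$ around $\mathbb{E}[R(u)]$ is then a one-line Hoeffding bound, and the work goes into showing $\mathbb{E}[R(u)]\to p(u)$: for fixed $u,x$ McDiarmid's inequality is applied to $F_x=\eta(x)-\eta(u)$ over a single block, and a bias estimate $\mathbb{E}\eta(x)\approx (l/(m c_d f(x)))^{1/d}$ (their Lemma~6) is combined with exactly the band-splitting you use --- $\mathbb{X}_1=\{|f(x)-f(u)|\text{ large}\}$ versus $\mathbb{X}_2=\{|f(x)-f(u)|\text{ small}\}$, the latter controlled by the no-flat-regions assumption.

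By contrast, you keep the estimator as defined in Section~3 (no splitting), push all the randomness into a single uniform concentration statement for $\eta(\cdot)$, and only afterwards read off the rank. The trade-offs are clear: the paper's splitting makes the concentration step trivial and isolates all difficulty in a bias calculation, at the cost of analyzing a modified estimator; your route treats the estimator actually used in the algorithm and is conceptually more direct, but requires the uniformity over the support (hence your explicit need for $k_0/\log n\to\infty$ via the union bound over a net), and you must be a little careful that the net argument transfers concentration from deterministic net points to the random sample points. Both proofs invoke the no-flat-regions hypothesis in the same place and for the same reason.
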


The proof involves the following two steps:
\begin{itemize}
  \item[1.] The expectation of the empirical rank $\mathbb{E}\left[R(y)\right]$ is shown to converge to $p(y)$ as $n\rightarrow\infty$.
  \item[2.] The empirical rank $R(y)$ is shown to concentrate at its expectation as $n\rightarrow\infty$.
\end{itemize}
Details can be found in the supplementary.
Small/Large $R(x)$ values correspond to low/high density respectively. $R(x)$ asymptotically converges to an integral expression, so it is smooth (Fig.\ref{f.pdf_rank}). Also $p(x)$ is uniformly distributed in $[0,1]$. This makes it appropriate to modulate the degrees with control of minimum, maximal and average degree.

Next we study RCut(NCut) induced on unweighted RMD graph. Assume for simplicity that each node $v$ is connected to exactly $k_\lambda(v)$ nearest neighbors of Eq.(\ref{eq:degree}). The limit cut expression on RMD graph involves an additional adjustable term which varies point-wise according to the density.
% For technical simplicity, we assume RMD graph ideally connects each point $x$ to its $k_\lambda$ closest neighbors.

\begin{theorem}\label{part2}
Assume $f$ satisfies the above regularity conditions and also the general assumptions in \citet{Maier1}. $S$ is a fixed hyperplane in $\mathbb{R}^d$. For unweighted RMD graph, set the degrees of points according to Eq.(\ref{eq:degree}), where $\lambda\in(0,1)$ is a constant. Let $\rho(x)=\lambda+2(1-\lambda)p(x)$. Assume $k_n/n\rightarrow{0}$. In case $d$=1, assume $k_n/\sqrt{n}\rightarrow\infty$; in case $d\geq$2 assume $k_n/\log{n}\rightarrow\infty$. Then as $n\rightarrow\infty$ we have that:
\begin{equation} \label{eq:rcut}
    \frac{1}{k_n}\sqrt[d]{\frac{n}{k_n}}RCut_n(S)\longrightarrow  C_d B_S \int_S{f^{1-\frac{1}{d}}(s)\rho^{1+\frac{1}{d}}(s)ds}.
\end{equation}
\begin{equation} \label{eq:ncut}
    \sqrt[d]{\frac{n}{k_n}}NCut_n(S)\longrightarrow  C_d B_S \int_S{f^{1-\frac{1}{d}}(s)\rho^{1+\frac{1}{d}}(s)ds}.
\end{equation}
where $C_d = \frac{2\eta_{d-1}}{(d+1)\eta_d^{1+1/d}}$, $B_S=\left(\mu(C^+)^{-1}+\mu(C^-)^{-1}\right)$,  and $\mu(C^{\pm})=\int_{C^{\pm}}f(x)dx$.
\end{theorem}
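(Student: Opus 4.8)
\medskip\noindent\textbf{Proof plan.}
The plan is to reduce the statement to the limit--cut analysis of \citet{Maier1} for $k$-NN graphs and then account for the single new feature here: the degree of a node varies point-wise through $\rho$. I would carry this out in two stages. First, by Theorem~\ref{rank-pvalue} the empirical rank obeys $R(x_v)\to p(x_v)$; strengthening this to a uniform-in-$v$ statement with a rate (which the concentration step in its proof already supplies) shows that the prescribed degree $k_\lambda(v)=k_n\bigl(\lambda+2(1-\lambda)R(x_v)\bigr)$ equals the \emph{deterministic} target $k_n\rho(x_v)$ up to a multiplicative $1+o(1)$ uniform over the support. Since $R\mapsto k_\lambda$ is $O(k_n)$-Lipschitz, replacing $k_\lambda(v)$ by $k_n\rho(x_v)$ distorts every node's $k$-NN radius by a relative $(1+o(1))^{1/d}$ factor and hence changes $RCut_n(S)$ and $NCut_n(S)$ only below the scale of the claimed limit. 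It then suffices to prove the theorem for the ``variable-$k$'' graph in which node $v$ is joined to its $\lceil k_n\rho(x_v)\rceil$ nearest neighbours.

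For that graph I would run the localisation argument of \citet{Maier1} with the varying radius. Writing $Cut_n(S)$ as a sum over crossing edges, I would compute its expectation by localising near $S$: around $s\in S$ the relevant edges have length at most $r_n(s)$, fixed by $n f(s)\,\eta_d\, r_n(s)^d\approx k_n\rho(s)$, i.e.\ $r_n(s)\approx\bigl(k_n\rho(s)/(n f(s)\eta_d)\bigr)^{1/d}$. Counting crossing pairs in a tangential cell gives an expected cut mass per unit $(d-1)$-area of $\approx c_d\, n^2 f(s)^2 r_n(s)^{d+1}$ for an explicit $d$-dependent constant $c_d$ (the same for the mutual and symmetric variants to leading order, since $f$ and $\rho$ are essentially constant on the vanishing scale $r_n(s)$ by the smoothness assumption and Theorem~\ref{rank-pvalue}). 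Substituting $r_n(s)$ and collecting powers via $2-\tfrac{d+1}{d}=1-\tfrac1d$ yields $\mathbb E[Cut_n(S)]\approx C_d'\, n^{1-1/d}k_n^{1+1/d}\int_S f^{1-1/d}(s)\rho^{1+1/d}(s)\,ds$. Then $|C^{\pm}|/n\to\mu(C^{\pm})$ by the law of large numbers, so multiplying by $|C^+|^{-1}+|C^-|^{-1}$ brings in $B_S$, and the prefactor $\tfrac1{k_n}\sqrt[d]{n/k_n}=n^{1/d}k_n^{-1-1/d}$ cancels the powers of $n$ and $k_n$ exactly, leaving $C_d B_S\int_S f^{1-1/d}\rho^{1+1/d}$ with $C_d=\tfrac{2\eta_{d-1}}{(d+1)\eta_d^{1+1/d}}$; the NCut identity follows the same way with the degree sums $\mathrm{vol}(C^{\pm})$ (which concentrate by the law of large numbers) replacing the cardinalities and the matching prefactor $\sqrt[d]{n/k_n}$.

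The remaining ingredient is concentration of $Cut_n(S)$ around its mean, for which I would use bounded differences as in \citet{Maier1}: relocating one sample changes $Cut_n(S)$ by $O(k_n)$ --- it touches its own $O(k_n)$ incident edges and the $O(k_n)$ edges directed into it, while shifting every other rank by $O(\log n/n)$ and hence every other degree by $o(1)$, which a.s.\ leaves the integer degrees unchanged --- so McDiarmid's inequality gives deviations of order $k_n\sqrt{n}$ (up to a logarithmic factor if an almost-sure conclusion is wanted). This is negligible against $n^{1-1/d}k_n^{1+1/d}$ precisely when $k_n/\sqrt n\to\infty$ for $d=1$ and automatically for $d\ge3$, while for $d=2$ it is exactly the hypothesis $k_n/\log n\to\infty$; the same growth condition is also what makes the local density and radius estimates concentrate uniformly, as in \citet{Maier1}.

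I expect the main obstacle to be the first stage --- decoupling the data-dependent degree function from the graph it defines. The ranks $R(x_v)$ are nonlinear functionals of the entire sample through the baseline-graph statistic $\eta$, so the RMD edge set is not a function of independent local quantities, and making legitimate the replacement of $k_\lambda(v)$ by $k_n\rho(x_v)$ requires upgrading Theorem~\ref{rank-pvalue} to $\sup_v|R(x_v)-p(x_v)|\to 0$ with a rate good enough that the induced $(1+o(1))$ distortion of every $k$-NN radius is still negligible after the $\tfrac1{k_n}\sqrt[d]{n/k_n}$ normalisation. Once that uniform control is in hand, the rest is a careful but essentially routine adaptation of the \citet{Maier1} machinery with the extra $\rho^{1+1/d}$ weight carried through the integrals.
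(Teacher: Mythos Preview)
Your proposal is correct and follows essentially the same route as the paper: decompose $RCut_n$/$NCut_n$ into the cut term and the balancing term, handle the balancing terms by Chernoff/LLN on $|V^\pm|$ and $vol(V^\pm)$, and for the cut term establish concentration via McDiarmid and then compute the expectation by localising near $S$ with the variable $k$-NN radius $\mathbb{E}(r_s^k)\approx\bigl(k_n\rho(s)/(nf(s)\eta_d)\bigr)^{1/d}$ and integrating the spherical-cap volumes, exactly as in the paper's Lemma~\ref{expectation}. You are in fact more explicit than the paper about the rank-replacement step --- the paper's proof simply inserts $\rho$ in place of the empirical degree modulation without separately justifying the uniform approximation $R(x_v)\to p(x_v)$ that you correctly identify as the main technical obstacle.
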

The proof shows the convergence of the cut term and balancing term respectively:
\begin{eqnarray}
    &\frac{1}{nk_n}\sqrt[d]{\frac{n}{k_n}}cut_n(S)
    \rightarrow C_d\int_S{f^{1-\frac{1}{d}}(s)\rho^{1+\frac{1}{d}}(s)ds}, \\
    &n\frac{1}{|V^\pm|}\rightarrow
    \frac{1}{\mu(C^\pm)}, \,\,\,\,  nk_n\frac{1}{vol(V^\pm)}\rightarrow
    \frac{1}{\mu(C^\pm)}. \label{eq:term2N}
    %n\frac{1}{|V^\pm|} &\rightarrow \frac{1}{2\mu(C^{\pm})} \label{eq:term3}
\end{eqnarray}
It is an extension of \citet{Maier1}. Details are in the supplementary.

%The proofs can be found in supplementary section.\\

\noindent
{\bf Imbalanced Data \& RMD Graphs:} \\
In the limit cut behavior, without our $\rho$ term, the balancing term $B_S=1/\alpha(1-\alpha)$ could induce a larger RCut(NCut) value for density valley cut than balanced cut when the underlying data is imbalanced, i.e. $\alpha$ is small.
Applying our parameterization scheme appends an additional term $\rho(s)=(\lambda+2(1-\lambda)p(s))$ in the limit-cut expressions.
$\rho(s)$ is monotonic in the p-value and so the cut-value at low/high density regions can be further reduced/increased. Indeed for small $\lambda$ value, cuts $S$ near peak densities have $p(s)\approx 1$ and so $\rho(s) \approx (2)^{1+\frac{1}{d}}$, while near valleys we have $\rho(s)\approx (\lambda)^{1+\frac{1}{d}}\ll 1$. This has a direct bearing on cut-ratio, $q$ since small $\lambda$ can reduce the cut-ratio $q$ for a given $y$ (see Fig.1) and leads to better control on imbalanced data. In summary, this analysis shows that RMD graphs used in conjunction with optimization framework of Fig.~\ref{f.framework} can adapt to varying levels of imbalanced data.

%%%%%%%%%%%%%%%%%%%%%%%%%%%%%%%%%%%%%%%%%%
\section{Experiments}\label{sec:experiment}
%%%%%%%%%%%%%%%%%%%%%%%%%%%%%%%%%%%%%%%%%%
Experiments in this section involve both synthetic and real data sets. We focus on imbalanced data by randomly sampling from different classes disproportionately.  For comparison purposes we compare RMD graph with full-RBF, $\epsilon$-graph, RBF $k$-NN, $b$-matching graph \citep{JebWanCha09} and full graph with adaptive RBF (full-aRBF) \citep{Zelnik04}. We view each as a parametric family of graphs parameterized by their relevant parameters and optimize over different parameters as described in Sec.~\ref{sec:RMD_idea} and Eq.(\ref{eq:selection}). For RMD graphs we also optimize over $\lambda$ in addition. Error rates are averaged over 20 trials.

For clustering experiments we apply both RCut and NCut, but focus mainly on NCut for brevity (NCut is generally known to perform better). We report performance by evaluating how well the cluster structures match the ground truth labels, as is the standard criterion for partitional clustering \citep{xu05}. For instance consider Table 1 where error rates for USPS symbols 1,8,3,9 are tabulated. We follow our procedure outlined in Sec.~\ref{sec:RMD_idea} and find the optimal partition that minimizes Eq.(\ref{eq:selection}) \emph{agnostic} to the correspondence between samples and symbols. Errors are then reported by looking at mis-associations. %To validate our framework of Fig.~\ref{f.framework} we also

For SSL experiments we randomly pick labeled points among imbalanced sampled data, guaranteeing at least one labeled point from each class. SSL algorithms such as RCut-based GRF and NCut-based GTAM are applied on parameterized graphs built from partially labeled data, and generate various partitions. Again we follow our procedure outlined in Sec.~\ref{sec:RMD_idea} and find the optimal partition that minimizes Eq.(\ref{eq:selection}) agnostic to ground truth labels. Then labels for unlabeled data are predicted based on the selected partition and compared against the \emph{unknown} true labels to produce the error rates.

\noindent \underline{\it Time Complexity:}
%The time complexity of U-statistic rank computation is $O(Bdn^2logn)$, where $B$ is a small constant, 5 in our experiments.
RMD graph construction is $O(dn^2logn)$ (similar to $k$-NN graph). Computing cut value and checking cluster size for a partition takes $O(n^2)$. So if totally $D$ graphs are parameterized; complexity of learning algorithm is $T$, the time complexity is $O(D(dn^2logn+T))$.

\noindent \underline{\it Tuning Parameters:} Note that parameters including $\lambda,k,\sigma$ that characterize the graphs are variables to be optimized in Eq.(\ref{eq:selection}). The only parameters left are: \\
(a) $k_0$ in the baseline graph. This is fixed to be $\sqrt{n}$. \\
(b) Imbalanced size threshold $\delta$. We fix this a priori to be about $0.05$, i.e., 5\% of all samples.

\noindent \underline{\it Evaluation against Oracle:} To evaluate the effectiveness of our framework (Fig.~\ref{f.framework}) and RMD parameterization, we compare against an ORACLE that is tuned to both ground truth labels as well as imbalanced proportions.

%Some general simulation parameters are:\\
%{\bf (1)} We employ U-statistic technique in rank computation to reduce variance (Sec.\ref{subsec:rank}), with $B=5$.\\
%{\bf (2)} All error rate results are averaged over 20 trials.\\
%Other parameters will be specified below.

\subsection{Synthetic Illustrative Example}
%\noindent
%{\bf Synthetic Datasets:}
%%%%%%%%%%%%%%%%%%%%%%%%%%%%%%%%%%%%%%%%%%
%\subsection{Synthetic DataSets}\label{subsec:syn}
%%%%%%%%%%%%%%%%%%%%%%%%%%%%%%%%%%%%%%%%%%
\begin{figure*}[htb]
\begin{centering}
\begin{minipage}[t]{.24\textwidth}
\includegraphics[width = 1\textwidth]{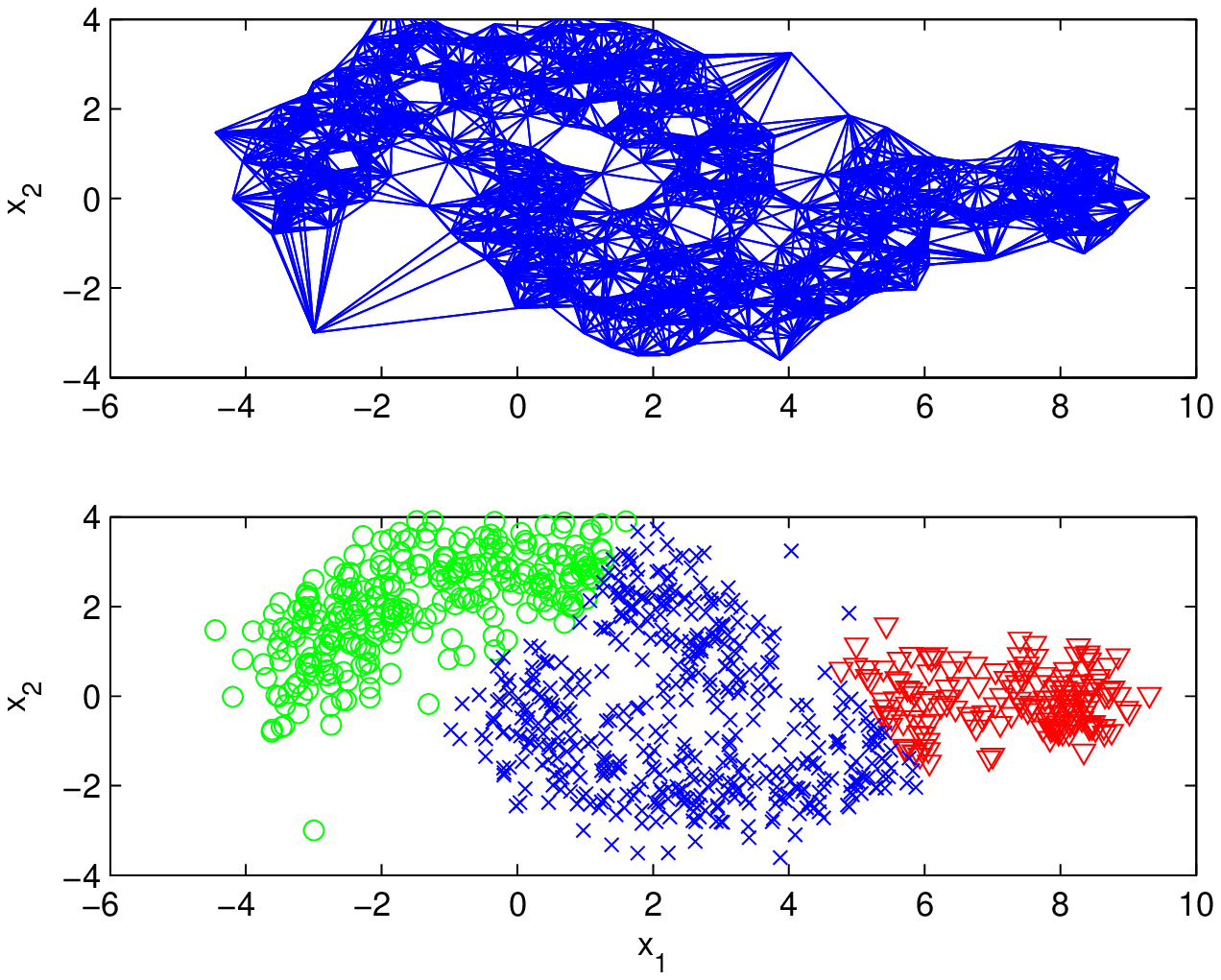}
\makebox[3cm]{(a) $k$-NN}
\end{minipage}
\begin{minipage}[t]{.24\textwidth}
\includegraphics[width = 1\textwidth]{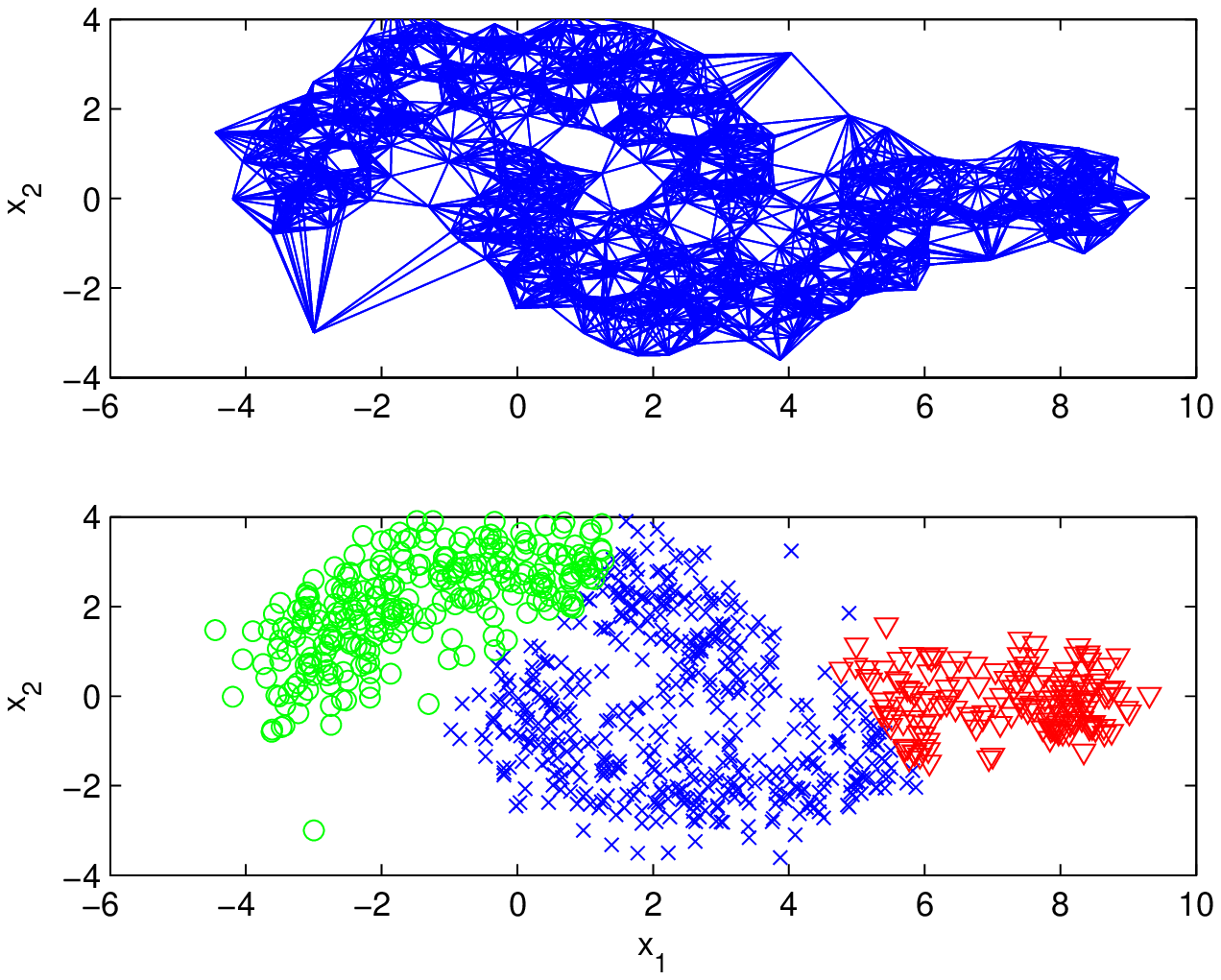}
\makebox[3cm]{(b) $b$-matching}
\end{minipage}
\begin{minipage}[t]{.24\textwidth}
\includegraphics[width = 1\textwidth]{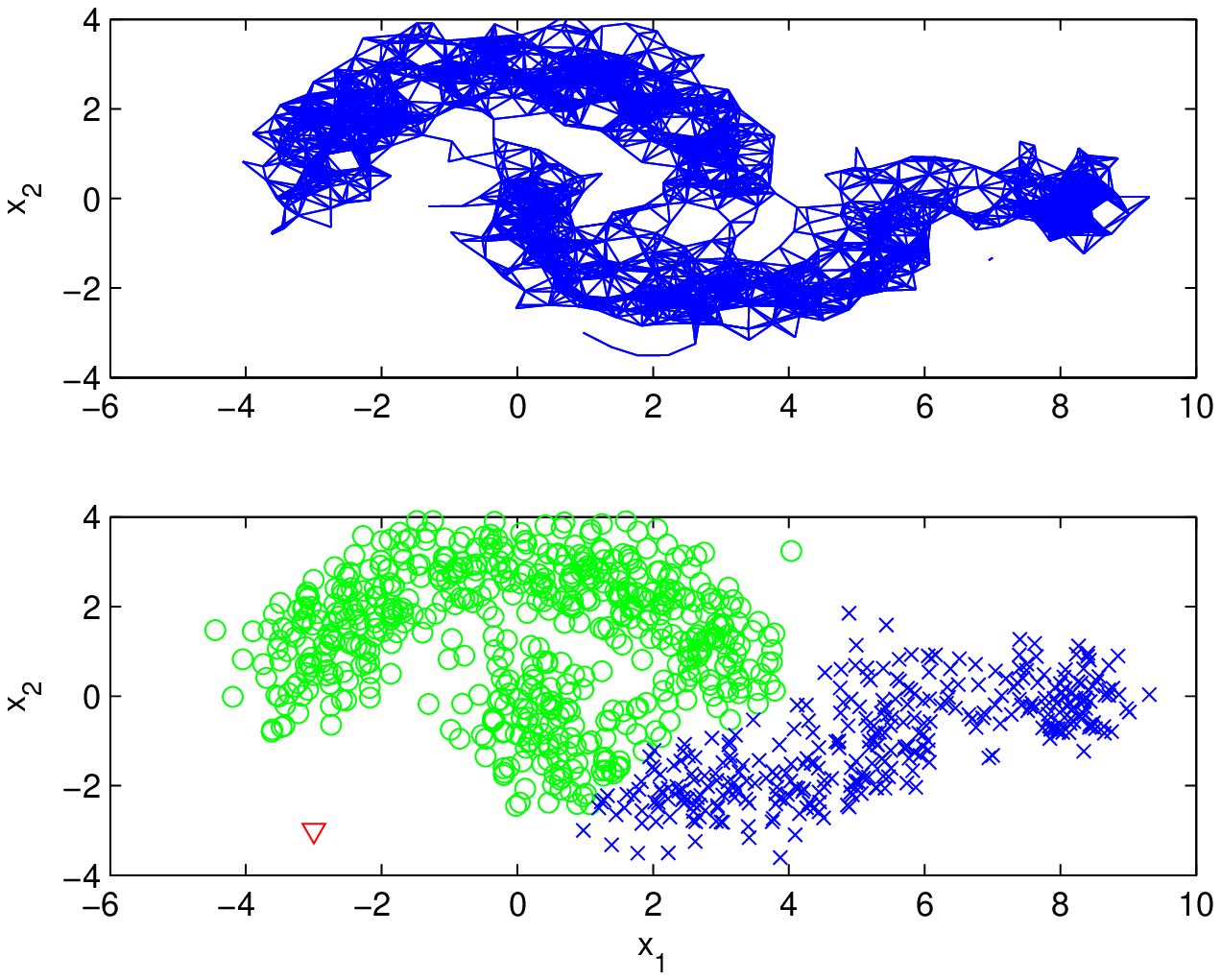}
\makebox[3cm]{(c) $\epsilon$-graph(full-RBF)}
\end{minipage}
\begin{minipage}[t]{.24\textwidth}
\includegraphics[width = 1\textwidth]{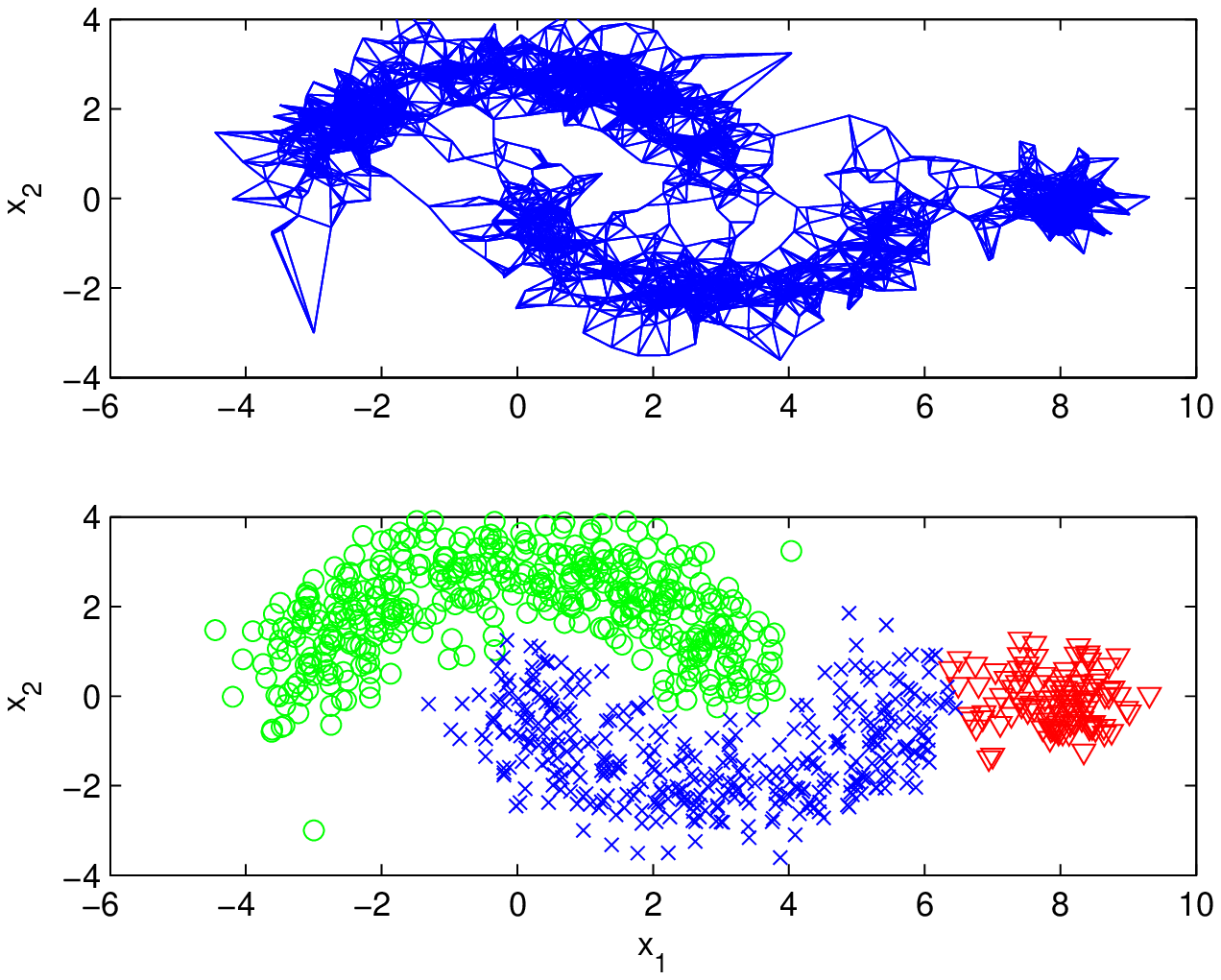}
\makebox[3cm]{(d) RMD}
\end{minipage}
\caption{Clustering results of 3-partition SC on 2 moons and 1 gaussian data set. SC on full-RBF($\epsilon$-graph) completely fails due to the outlier. For $k$-NN and $b$-matching graphs SC cannot recognize the long winding low-density regions between 2 moons, and fails to find the rightmost small cluster. Our method sparsifies the graph at low-density regions, allowing to cut along the valley, detect the small cluster and is robust to outliers.}
\label{fig:complex_shape}
\end{centering}
\end{figure*}

Consider a multi-cluster complex-shaped data set, which is composed of 1 small Gaussian and 2 moon-shaped proximal clusters shown in Fig.\ref{fig:complex_shape}. Sample size $n=1000$ with the rightmost small cluster $10\%$ and two moons $45\%$ each. This example is only for illustrative purpose with a single run, so we did not parameterize the graph or apply step (4). We fix $\lambda = 0.5$, and choose $k=l=30$, $\epsilon=\sigma=\tilde{d}_k$, where $\tilde{d}_k$ is the average $k$-NN distance. Model-based approaches can fail on such dataset due to the complex shapes of clusters. The 3-partition SC based on RCut is applied. On $k$-NN and $b$-matching graphs SC fails for two reasons: (1) SC cuts at balanced positions and cannot detect the rightmost small cluster; (2) SC cannot recognize the long winding low-density regions between 2 moons because there are too many spurious edges and the Cut value along the curve is big. SC fails on $\epsilon$-graph(similar on full-RBF) because the outlier point forms a singleton cluster, and also cannot recognize the low-density curve. Our RMD graph significantly sparsifies the graph at low-densities, enabling SC to cut along the valley, detect small clusters and reject outliers.

\subsection{Real Experiments}
%\noindent
%{\bf Real Datasets:}

We focus on imbalanced settings for several real datasets. We construct $k$-NN, $b$-match, full-RBF and RMD graphs all combined with RBF weights, but do not include the $\epsilon$-graph because of its overall poor performance \citep{JebWanCha09}.
Our sample size varies from 750 to 1500.
We discretize not only $\lambda$ but also $k$, $\sigma$ to parameterize graphs.
%For example, the result of RBF $k$-NN graph is chosen based on optimizing the following expression:
%\begin{eqnarray}
%% \nonumber to remove numbering (before each equation)
%  & J(\delta)=\min_{k,\sigma}\{Cut\left(C(k,\sigma),\bar{C}(k,\sigma)\right)\} \\
%\nonumber
%  & s.t. ~~\min\{|C(k,\sigma)|,|\bar{C}(k,\sigma)|\}\geq \delta n
%\end{eqnarray}
%where, $C(k,\sigma),\,\bar C(k,\sigma)$ denotes the RCut partition obtained on the RBF $k$-NN graph with nearest neighbor parameter $k$ and RBF parameter $\sigma$.
% The optimization problem is non-convex but involves search over a small number of parameters.
% We discretized the parameters in our experiments.
We vary $k$ in $\{5,10,20,30,\ldots,100,120,150\}$.
While small $k$ may lead to disconnected graphs this is not an issue for us since singleton cluster candidates are ruled infeasible in PCut. Also notice that for $\lambda=1$, RMD graph is identical to $k$-NN graph. For RBF parameter $\sigma$ it has been suggested to be of the same scale as the average $k$-NN distance $\tilde{d}_k$ \citep{WanJebCha08}. This suggests a discretization of $\sigma$ as $2^j \tilde{d}_k$ with $j=-3,\,-2,\ldots,\,3$. We discretize $\lambda \in [0,\,1]$ and varied in steps of $0.2$.

In the model selection step Eq.(\ref{eq:selection}), cut values of various partitions are evaluated on a same $k_0$-NN graph with $k_0=30, \sigma = \tilde{d}_{30}$ before selecting the min-cut partition. The true number of clusters/classes $K$ is supposed to be known. We assume meaningful clusters are at least $5\%$ of the total number of points, $\delta=0.05$. We set the GTAM parameter $\mu=0.05$ as in \citep{JebWanCha09} for the SSL tasks, and each time 20 randomly labeled samples are chosen with at least one sample from each class.

%\begin{figure}[tb]
%\begin{centering}
%\begin{minipage}[t]{.23\textwidth}
%\includegraphics[width = 1\textwidth]{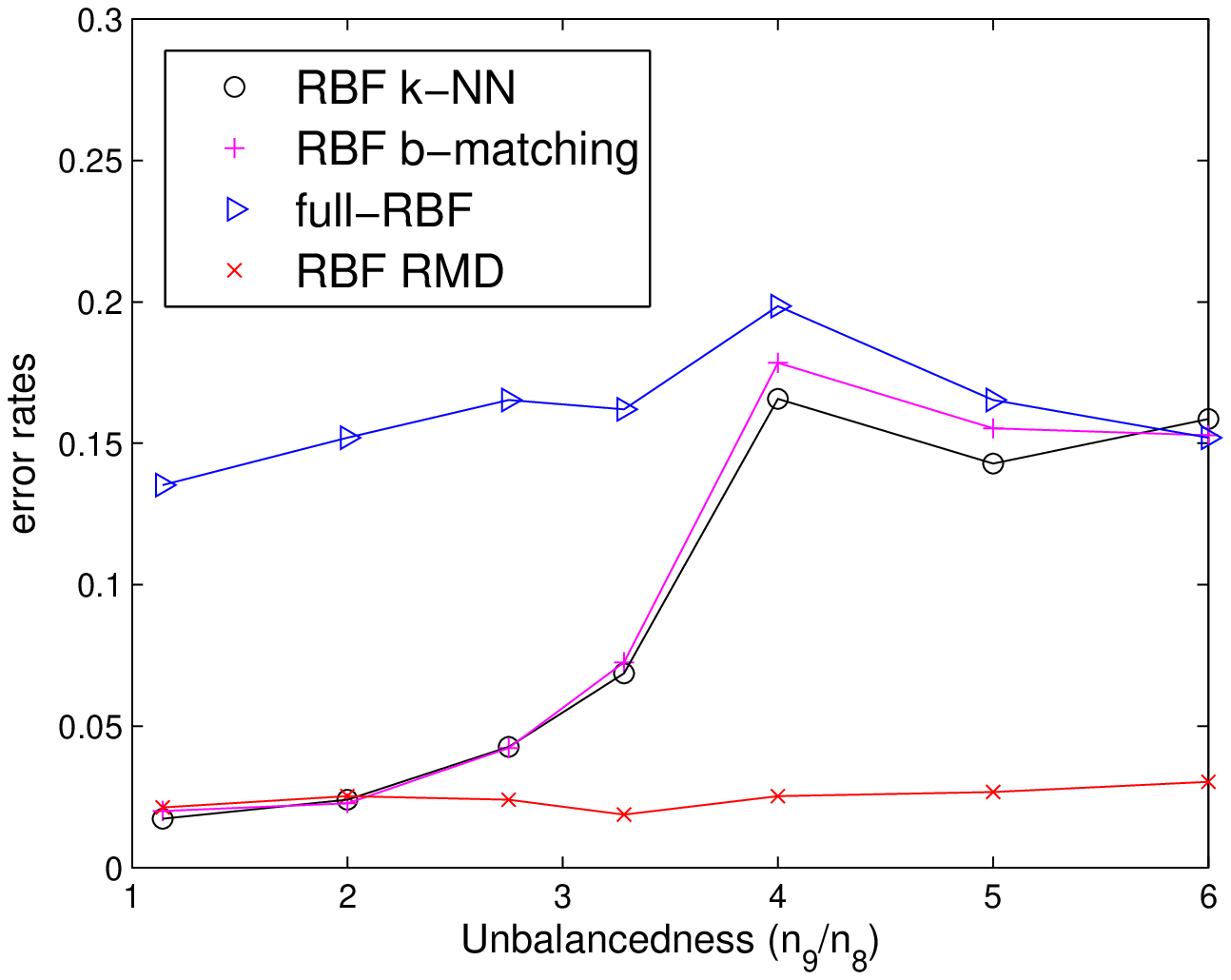}
%\makebox[3cm]{\small (a) SC}
%\end{minipage}
%\begin{minipage}[t]{.23\textwidth}
%\includegraphics[width = 1\textwidth]{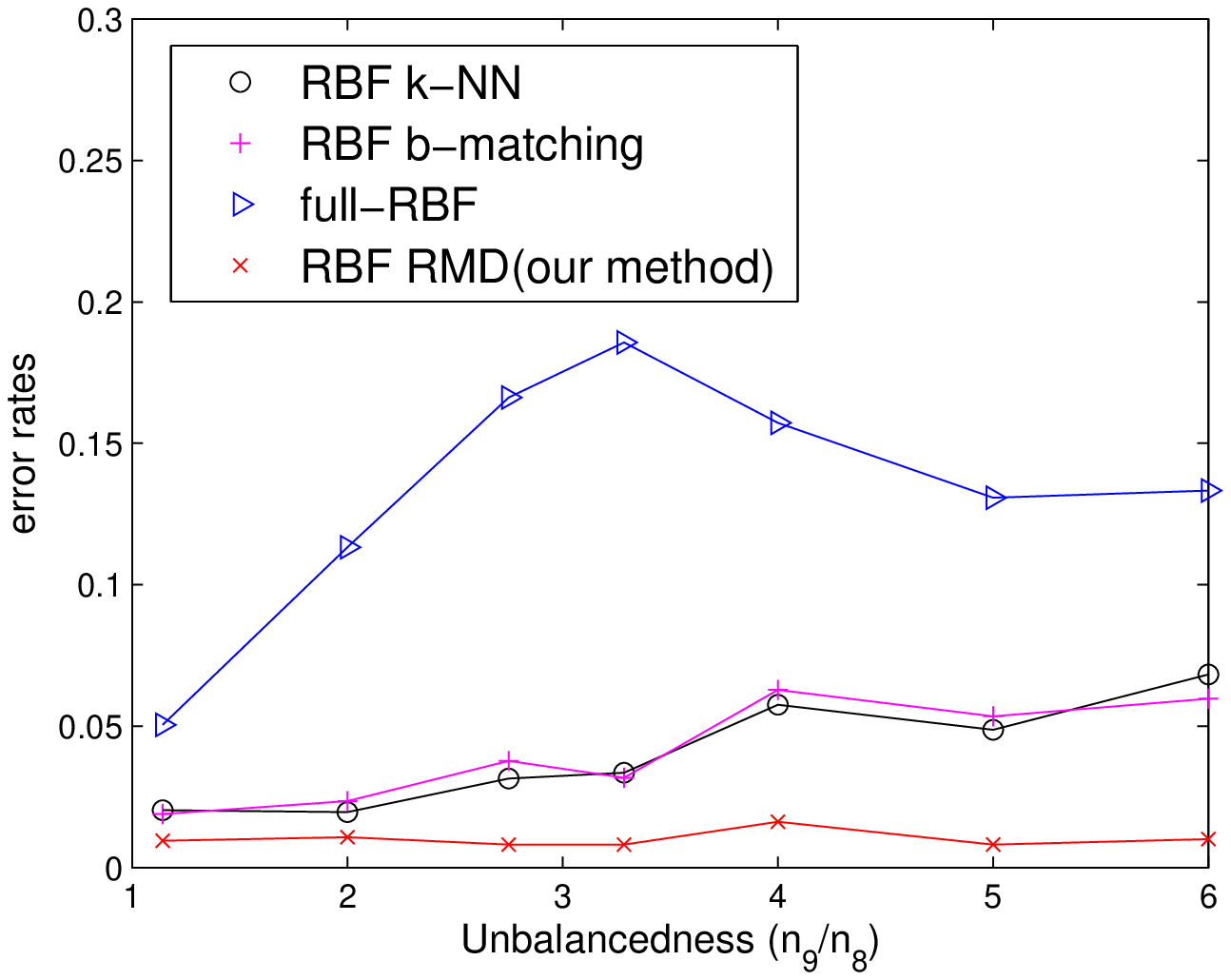}
%\makebox[3cm]{\small (b) GTAM}
%\end{minipage}
%%\begin{minipage}[t]{.34\textwidth}
%%\includegraphics[width = 1\textwidth]{USPS8v9_GTAM.eps}
%%\makebox[3.7cm]{\small (c) GTAM on USPS 8vs9}
%%\end{minipage}
%\caption{\small Error rate performance of SC and GTAM on USPS 8vs9 with varying levels of unbalancedness. We omitted GRF since the results are qualitatively similar. Our method adapts to different levels of unbalancedness much better than traditional graphs. Furthermore, when data is very unbalanced (big $n_9/n_8$), varying $k,\sigma$ does not really help; decreasing $\lambda$ adapts the algorithm well.}
%\label{fig:USPS8v9}
%\end{centering}
%\vspace{-0.2in}
%\end{figure}

%\begin{wrapfigure}{r}{.45\textwidth}
%\centering
%%\begin{minipage}[t]{.23\textwidth}
%\includegraphics[width = .45\textwidth]{USPS8v9_SC.eps}
%%\makebox[3cm]{\small (a) SC}
%%\end{minipage}
%%\begin{minipage}[t]{.23\textwidth}
%%\includegraphics[width = .45\textwidth]{USPS8v9_GTAM.eps}
%%\makebox[3cm]{\small (b) GTAM}
%%\end{minipage}
%%\vspace{-15pt}
%\caption{\small Error rates of SC on USPS 8vs9 with varying levels of imbalance. Our method adapts to different levels of imbalance much better than traditional graphs.}
%\label{fig:USPS8v9}
%% \vspace*{-10pt}
%\end{wrapfigure}

\begin{figure*}[tb]
\begin{centering}
\begin{minipage}[t]{.32\textwidth}
\includegraphics[width = 1\textwidth]{USPS8v9_SC.eps}
\makebox[4cm]{(a) SC(clustering)}
\end{minipage}
\begin{minipage}[t]{.32\textwidth}
\includegraphics[width = 1\textwidth]{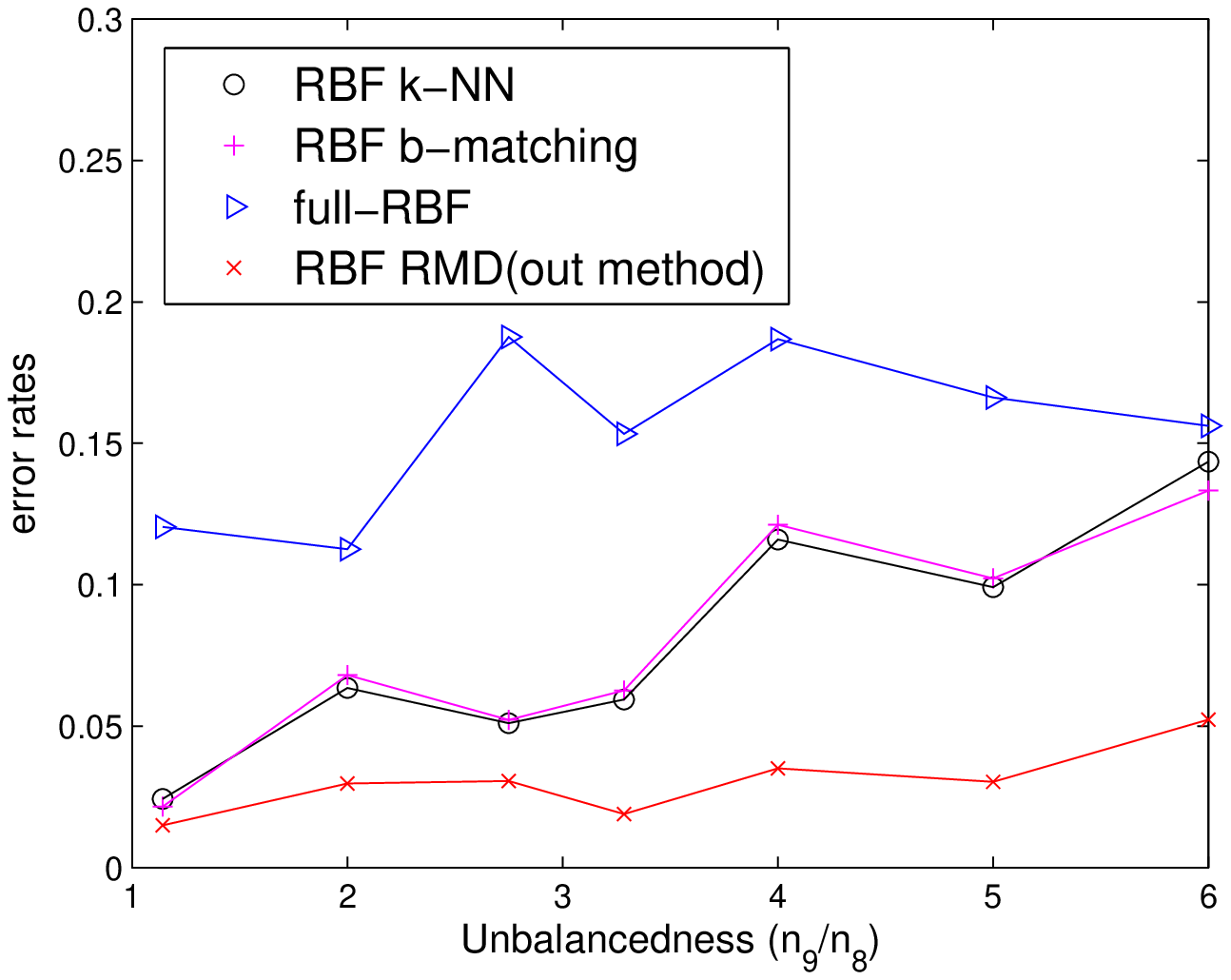}
\makebox[4cm]{(b) GRF(SSL)}
\end{minipage}
\begin{minipage}[t]{.32\textwidth}
\includegraphics[width = 1\textwidth]{USPS8v9_GTAM.eps}
\makebox[4cm]{(c) GTAM(SSL) }
\end{minipage}
\caption{Error rates of SC and SSL algorithms on USPS 8vs9 with varying levels of imbalancedness. Our RMD scheme remains competitive when the data is balanced, and adapts to imbalancedness much better than traditional graphs.}
\label{fig:USPS8v9}
\end{centering}
%\vspace{-10pt}
\end{figure*}

\noindent
\textbf{Varying Imbalancedness:} \\
Here we use 8 vs 9 in the 256-dim USPS digit data set and randomly sample 750 points with different levels of imbalancedness. Normalized SC, GRF and GTAM are then applied. Fig.\ref{fig:USPS8v9} shows that when the underlying clusters/classes are balanced, our RMD method performs as well as traditional graphs; as the imbalancedness increases, the performance of other graphs degrades, while our method can adapt to different levels of imbalancedness.
\begin{table*}[tb]
\caption{ Imbalancedness of data sets. }
\begin{center}
\begin{tabular}{|c||c|}
  \hline
  % after \\: \hline or \cline{col1-col2} \cline{col3-col4} ...
  Data sets     & \#samples per cluster  \\
  \hline\hline
  2-cluster(eg. USPS 8vs9 etc.)     & 150/600 \\    \hline
  3-cluster(eg. SagImg 3/4/5 etc.)  & 200/400/600    \\ \hline
  4-cluster(eg. USPS 1/8/3/9 etc.)  & 200/300/400/500     \\
  \hline
\end{tabular}
\end{center}
\label{tab:Imbalanced_demo}
\end{table*}

\begin{table*}[tb]
\caption{ Error rates of normalized SC on various graphs for imbalanced real data sets. Our method performs significantly better than other methods. First row (``BO'' Balanced Oracle) shows RBF $k$-NN results on imbalanced data with $k,\sigma$ tuned using ground truth labels but on balanced data. Last row (``O'' Oracle) shows the best ORACLE results of RBF RMD on imbalanced data. }
\vspace{-10pt}
\begin{center}
\begin{tabular}{|c||c|c|c|c|c|c|c|c|c|c|}
  \hline
  % after \\: \hline or \cline{col1-col2} \cline{col3-col4} ...
  \multirow{2}{*}{Error Rates(\%)}   &   \multicolumn{2}{c|}{USPS}  &   \multicolumn{3}{c|}{SatImg}  &   \multicolumn{3}{c|}{OptDigit}   & \multicolumn{2}{c|}{LetterRec} \\
  \cline{2-11}
  & 8vs9 & 1,8,3,9 & 4vs3 & 3,4,5 & 1,4,7 & 9vs8 & 6vs8 & 1,4,8,9 & 6vs7 & 6,7,8 \\
  \hline\hline
  RBF $k$-NN(BO)    & 33.20 & 17.60 & 15.76 & 22.08 & 25.28 & 15.17 & 11.15  & 30.02 & 7.85 & 38.70     \\
  RBF $k$-NN        & 16.67 & 13.21 & 12.80 & 18.94 & 25.33 & 9.67  & 10.76  & 26.76 & 4.89 & 37.72 \\
  RBF $b$-match  & 17.33 & 12.75 & 12.73 & 18.86 & 25.67 & 10.11  & 11.44  & 28.53 & 5.13 & 38.33 \\
  full-RBF          & 19.87 & 16.56 & 18.59 & 21.33 & 34.69 & 11.61 & 15.47 & 36.22 & 7.45 & 35.98 \\
  full-aRBF         & 18.35 & 16.26 & 16.79 & 20.15 & 35.91 & 10.88 & 13.27 & 33.86 & 7.58 & 35.27 \\
  RBF RMD           & 4.80  & 9.66 & 9.25 & 16.26 & 20.52 & 6.35  & 6.93  & 23.35 & 3.60 & 28.68 \\
  RBF RMD(O)        & 3.13  & 7.89 & 8.30 & 14.19 & 18.72 & 5.43  & 6.27  & 19.71 & 3.02 & 25.33 \\
  \hline
\end{tabular}
\end{center}
\label{tab:real_SC}
\end{table*}

\noindent
\textbf{Other Real Data Sets:} \\
We apply SC and SSL algorithms on several other real data sets including USPS(256-dim), Statlog landsat satellite images(4-dim), letter recognition images(16-dim) and optical recognition of handwritten digits(16-dim) \citep{uci10}.
%We randomly sample 150/600, 200/400/600, 200/300/400/500 points for 2,3,4-class cases, with corresponding orders of class indices listed in Table \ref{tab:real_SC},\ref{tab:real_SSL}.
We sample data sets in an imbalanced way shown in Table \ref{tab:Imbalanced_demo}.

In Table \ref{tab:real_SC} the first row is the imbalanced results of RBF $k$-NN using ORACLE $k,\sigma$ parameters tuned with ground-truth labels on balanced data for each data set (300/300, 250/250/250, 250/250/250/250 samples for 2,3,4-class cases). Comparison of first two rows reveals that the ORACLE choice on balanced data may not be suitable for imbalanced data, while our PCut framework, although agnostic, picks more suitable $k,\sigma$ for RBF $k$-NN.
The last row presents ORACLE results on RBF RMD tuned to imbalanced data. This shows that our PCut on RMD, agnostic of true labels, closely approximates the oracle performance. Also, both tables show that
our RMD graph parameterization performs consistently better than other methods. %graph parameterization performs consistently better than other methods.
%The difference in performance between the agnostic $k$-nn but optimized using our framework of Fig.~\ref{f.framework} vs the Oracle $k$-nn demonstrates the utility of our method. The last column also demonstrates the value of our RMD graph construction over other methods.

%Tab.\ref{tab:real_SC},\ref{tab:real_SSL} demonstrates two important points:
%
%(1) Compare the first 2 rows of Tab.\ref{tab:real_SC}} and we see that even the best {\bf ORACLE} parameters obtained under balanced settings can be unsuitable to the unbalanced settings for the same task on the same data set. Our optimization framework AGNOSTIC of ground truth labels can improve performance.
%
%(2) Tab.\ref{tab:real_SC},\ref{tab:real_SSL} also show that our method (last row) performs consistently better, which demonstrates that our RMD scheme can further boost the performance for unbalanced data.

\begin{table*}[h]
\caption{Error rate performance of GRF and GTAM for imbalanced real data sets. Our method performs significantly better than other methods.}
\vspace{-10pt}
\begin{center}
\begin{tabular}{|c|c||c|c|c|c|c|c|c|c|c|}
  \hline
  % after \\: \hline or \cline{col1-col2} \cline{col3-col4} ...
%  \multirow{2}{*}{\multicolumn{2}{c}{Error Rates(\%)}}
  \multicolumn{2}{|c||}{\multirow{2}{*}{Error Rates(\%)}}  &   \multicolumn{2}{c|}{USPS}  &   \multicolumn{2}{c|}{SatImg}  &   \multicolumn{3}{c|}{OptDigit}   & \multicolumn{2}{c|}{LetterRec} \\
  \cline{3-11}
  \multicolumn{2}{|c||}{}  & 8vs6 & 1,8,3,9 & 4vs3 & 1,4,7 & 6vs8 & 8vs9 & 6,1,8 & 6vs7 & 6,7,8 \\
  \hline\hline
  \multirow{4}{*}{GRF}
    & RBF $k$-NN            & 5.70 & 13.29 & 14.64 & 16.68 & 5.68  & 7.57  & 7.53 & 7.67 & 28.33 \\
    & RBF $b$-matching      & 6.02 & 13.06 & 13.89 & 16.22 & 5.95  & 7.85  & 7.92 & 7.82 & 29.21 \\
    & full-RBF              & 15.41 & 12.37 & 14.22 & 17.58 & 5.62 & 9.28 & 7.74 & 11.52 & 28.91 \\
    & full-aRBF             & 12.89 & 11.74 & 13.58 & 17.86 & 5.78 & 8.66 & 7.88 & 10.10 & 28.36 \\
    & RBF RMD               & 1.08  & 10.24 & 9.74 & 15.04 & 2.07  & 2.30  & 5.82 & 5.23 & 27.24 \\
  \hline
  \multirow{4}{*}{GTAM}
    & RBF $k$-NN            & 4.11  & 10.88 & 26.63 & 20.68 & 11.76 & 5.74  & 12.68 & 19.45 & 27.66 \\
    & RBF $b$-matching      & 3.96  & 10.83 & 27.03 & 20.83 & 12.48 & 5.65  & 12.28 & 18.85 & 28.01 \\
    & full-RBF              & 16.98  & 11.28 & 18.82 & 21.16 & 13.59 & 7.73 & 13.09 & 18.66 & 30.28 \\
    & full-aRBF             & 13.66  & 10.05 & 17.63 & 22.69 & 12.15 & 7.44 & 13.09 & 17.85 & 31.71 \\
    & RBF RMD               & 1.22  & 9.13 & 18.68 & 19.24 & 5.81  & 3.12  & 10.73 & 15.67 & 25.19 \\
  \hline
\end{tabular}
\end{center}
\label{tab:real_SSL}
\end{table*}

\subsection{Small Cluster Detection: PCut with varying Partition-size Threshold}
We illustrate how our method can be used to find small-size clusters. This type of problem may arise in community detection in large real networks, where graph-based approaches are popular but small-size community detection is difficult \citep{Shah10}. The dataset depicted in Fig.\ref{fig:multiple_cuts} has 1 large and 2 small proximal Gaussian components along $x_1$ axis: $\sum^{3}_{i=1}\alpha_iN(\mu_i,\Sigma_i)$, where $\alpha_1:\alpha_2:\alpha_3=2:8:1$, $\mu_1$=[-0.7;0], $\mu_2$=[4.5;0], $\mu_3$=[9.7;0], $\Sigma_1=I, \Sigma_2=diag(2,1), \Sigma_3=0.7I$. Binary weight is adopted.
Fig.\ref{fig:multiple_cuts}(a) shows a plot of cut values for a baseline $k$-NN graph for different cut positions averaged over 20 Monte Carlo runs. The cut-value plot resembles the underlying density. The two density valleys are at imbalanced positions with rightmost cluster smaller but the leftmost valley deeper. %cluster, but has a deeper valley.
%
%\begin{wrapfigure}{r}{.5\textwidth}
%\vspace{-20pt}
%\centering
%\includegraphics[width=.47\textwidth]{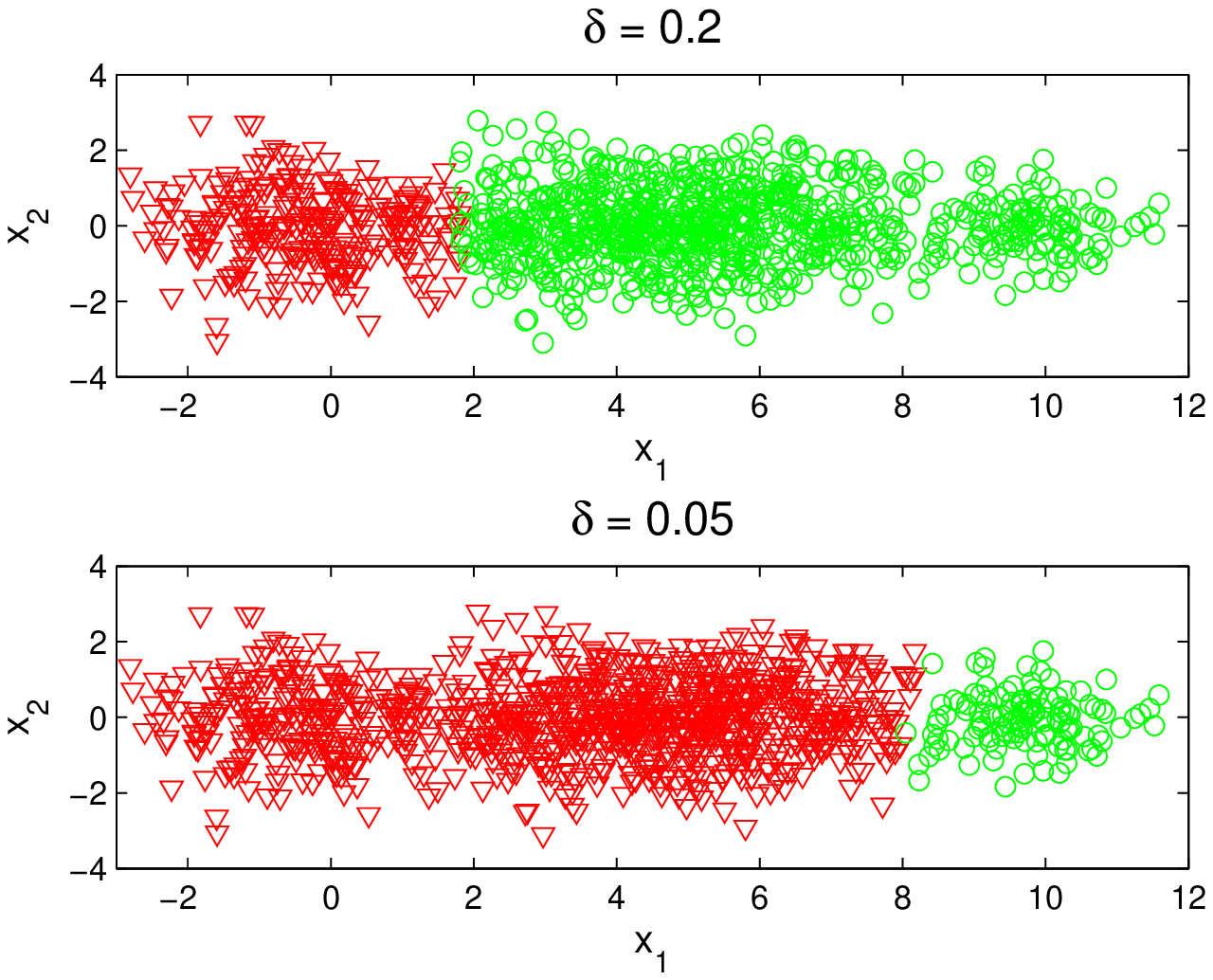}
%\vspace{-10pt}
%\caption{\small Gaussian mixture with 3 unbalanced Gaussian components. Depicted is the result of a single realization. Our method is able to discover two small clusters. The left larger one is detected for a larger $\delta$, where the right smaller one is viewed as outliers. When even reducing $\delta$, the right smaller one is detected(see Eq.(\ref{eq:selection})).}
%\label{fig:multicluster}
%\end{wrapfigure}

\begin{figure*}[htbp]
\begin{centering}
\begin{minipage}[t]{.32\textwidth}
\includegraphics[width = 1\textwidth]{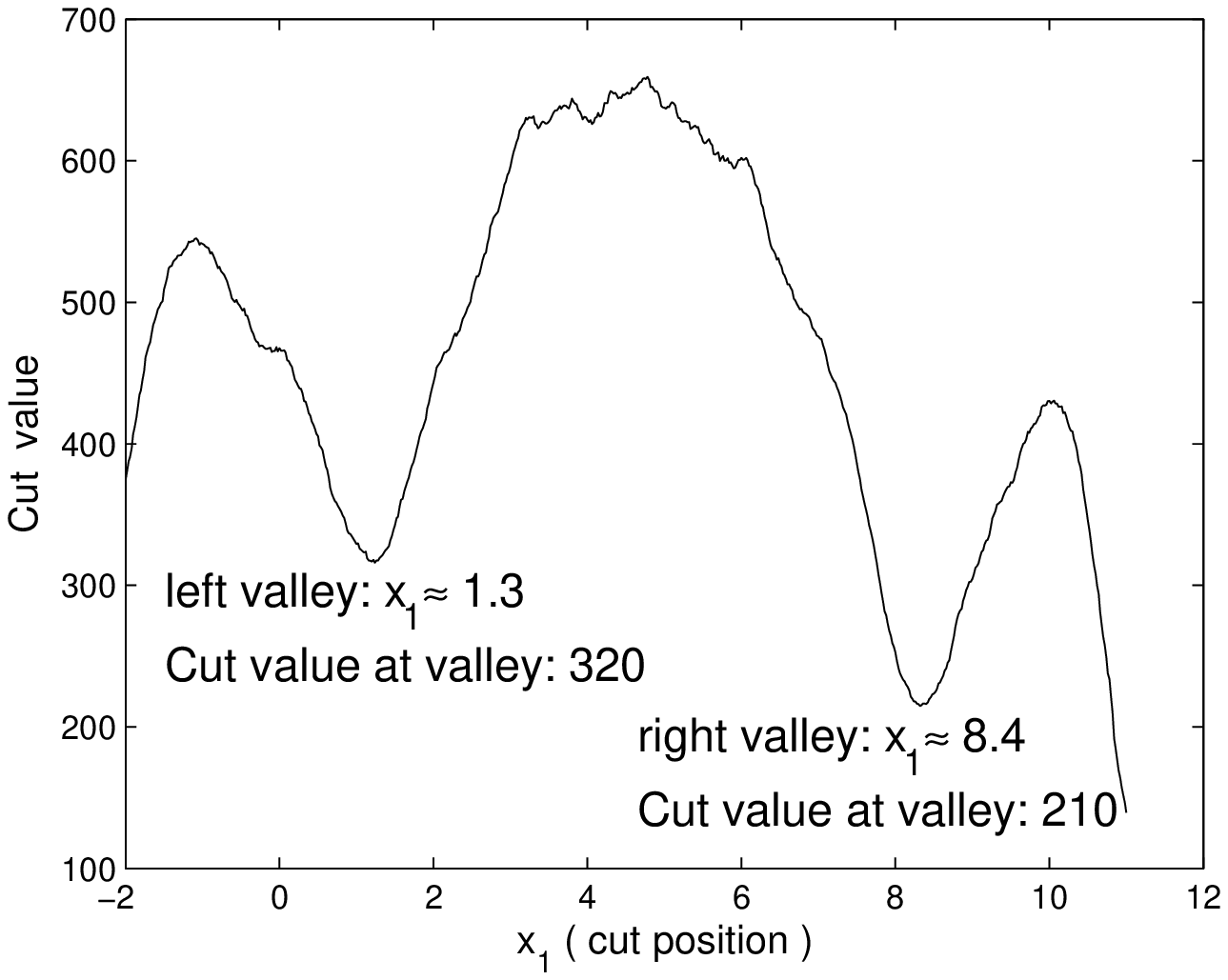}
\makebox[4.5cm]{(a) Cut value vs. cut position}
\end{minipage}
\begin{minipage}[t]{.32\textwidth}
\includegraphics[width = 1\textwidth]{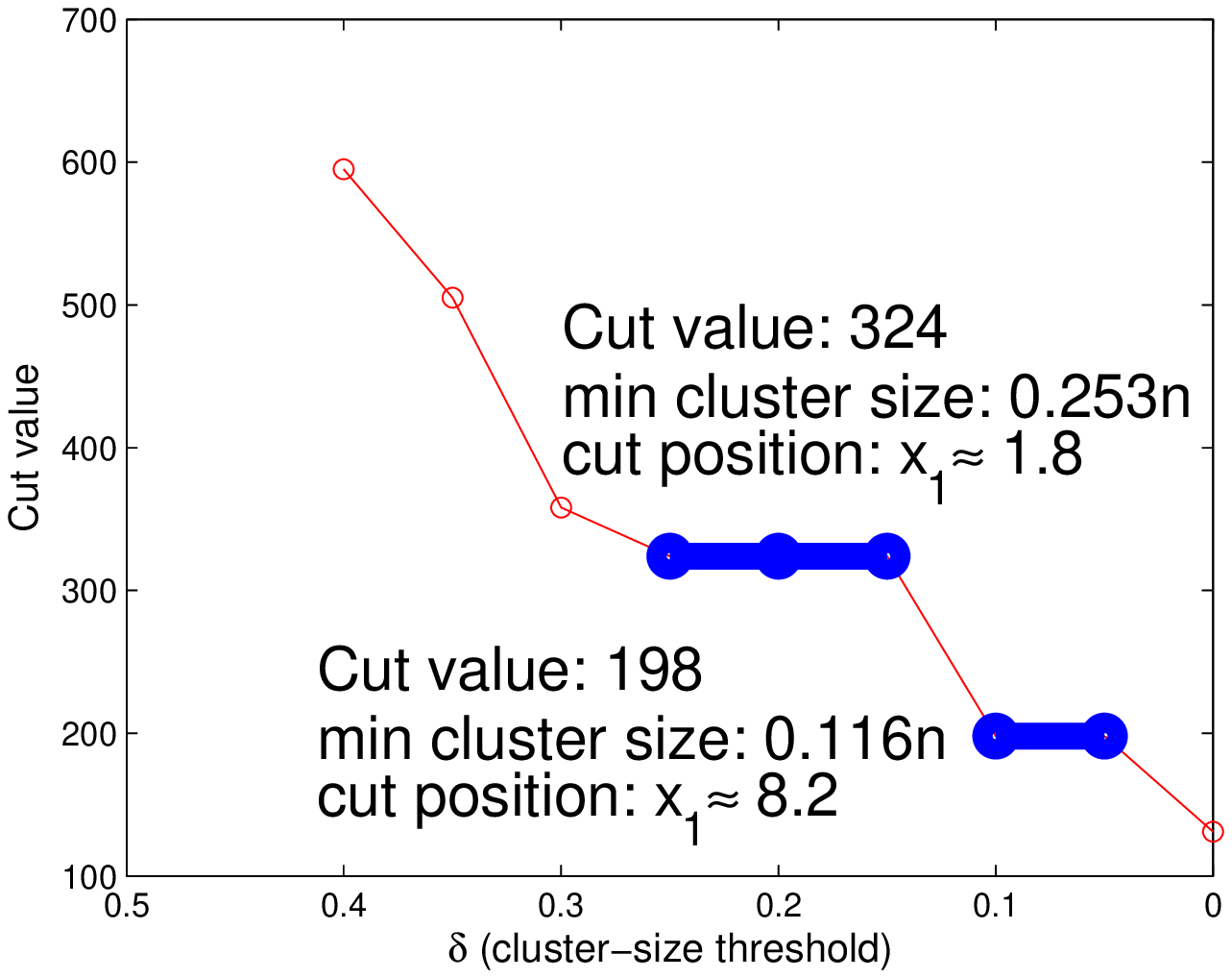}
\makebox[4.5cm]{(b) Cut value vs. $\delta$}
\end{minipage}
\begin{minipage}[t]{.32\textwidth}
\includegraphics[width = 1\textwidth]{3g_multicuts_2.eps}
\makebox[4.5cm]{(c) Clustering results }
\end{minipage}
\caption{ Small Cluster Detection: 2-partition SC results of 1 large and 2 small proximal gaussian components. Both valleys are at imbalanced positions. The rightmost cluster is smaller than the left, with a deeper valley. Results in (b) are from one run. As shown in (b) and (c), the left cluster is detected for a larger $\delta$, where the right smaller one is viewed as outliers. The right smaller cluster is detected by further reducing $\delta$ of (Eq.(\ref{eq:selection})).}
\label{fig:multiple_cuts}
\end{centering}
\end{figure*}

To apply our method we vary the cluster-size threshold $\delta$ in PCut. We plot PCut against $\delta$ as shown in Fig.\ref{fig:multiple_cuts}(b). As seen in Fig.\ref{fig:multiple_cuts}(b), when $\delta\geq 0.3$, the optimal cut is close to the valley. However, since the proportion of data samples in the smaller clusters is less than 30\% we see that the optimal cut is bounded away from both valleys. As $\delta$ is decreased in the range $0.25\geq\delta\geq0.15$, the optimal cut is now attained at the left valley($x_1\approx 1.8$). An interesting phenomena is that the curve flattens out in this range.
This corresponds to the fact that the cut value is minimized at this position ($x_1 = 1.8$) for any value of $\delta \in [.15,\,.25]$. This flattening out can happen only at valleys since valleys represent a ``local'' minima for the model selection step of Eq.~\ref{eq:selection} under the constraint imposed by $\delta$. Consequently, small clusters can be detected based on the flat spots. Next when we further vary $\delta$ in the region $0.1\geq\delta\geq0.05$, the best cut is attained near the right and deeper valley($x_1\approx 8.2$). Again the curve flattens out revealing another small cluster.

%%%%%%%%%%%%%%%%%%%%%%%%%%%%%%%%%%%%%%%%%%
\section{Conclusion}\label{sec:conclusion}
%%%%%%%%%%%%%%%%%%%%%%%%%%%%%%%%%%%%%%%%%%
In this paper we explain why does spectral clustering based on minimizing RCut(NCut) leads to poor clustering performance when data is imbalanced and proximal. To this end we propose the partition constraint min-cut (PCut) framework, which seeks min-cut partitions under minimum cluster size constraints. Since constrained min-cut is NP-hard, we adopt existing spectral methods (SC, GRF, GTAM) as a black-box subroutine on a parameterized family of graphs to generate candidate partitions and solve PCut on these partitions.
The parameterization of graphs is based on adaptively modulating the node degrees in varying levels to adapt to different levels of imbalanced data.
Our framework automatically selects the parameters based on PCut objective, and can be used in conjunction with other graph-based partition methods such as 1-spectral clustering, cheeger cut or sparsest cut \citep{Buhler09,Hein10,Szlam10,Arora09}.
Our idea is then justified through limit cut analysis and both synthetic and real experiments on clustering and SSL tasks.

% \acks{Acknowledgements should go at the end, before appendices and references.}
\bibliographystyle{unsrtnat}
\bibliography{RMD_bib}

\begin{thebibliography}{33}
\providecommand{\natexlab}[1]{#1}
\providecommand{\url}[1]{\texttt{#1}}
\expandafter\ifx\csname urlstyle\endcsname\relax
  \providecommand{\doi}[1]{doi: #1}\else
  \providecommand{\doi}{doi: \begingroup \urlstyle{rm}\Url}\fi

\bibitem[He and Garcia(2009)]{HeGarcia09}
H.~He and E.A. Garcia.
\newblock Learning from imbalanced data.
\newblock In \emph{IEEE Trans. on Knowledge and Data Engineering}, 2009.

\bibitem[Fraley and Raftery(2002)]{Fraley02}
C.~Fraley and A.~Raftery.
\newblock Model-based clustering, discriminant analysis, and density
  estimation.
\newblock In \emph{Journal of the American Statistical Association}. MIT Press,
  2002.

\bibitem[Ng et~al.(2001)Ng, Jordan, and Weiss]{Ng01}
A.~Y. Ng, M.~I. Jordan, and Y.~Weiss.
\newblock On spectral clustering: Analysis and an algorithm.
\newblock In \emph{NIPS 14}, pages 849--856, 2001.

\bibitem[Hagen and Kahng(1992)]{Hagen92}
L.~Hagen and A.~Kahng.
\newblock New spectral methods for ratio cut partitioning and clustering.
\newblock In \emph{IEEE Trans. Computer-Aided Design, 11(9)}, pages 1074--1085,
  1992.

\bibitem[Shi and Malik(2000)]{Shi00}
J.~Shi and J.~Malik.
\newblock Normalized cuts and image segmentation.
\newblock \emph{IEEE Transactions on Pattern Analysis and Machine
  Intelligence}, 22\penalty0 (8):\penalty0 888--905, 2000.

\bibitem[Zhu(2008)]{Zhu08}
X.~Zhu.
\newblock Semi-supervised learning literature survey, 2008.

\bibitem[Wang et~al.(2008)Wang, Jebara, and Chang]{WanJebCha08}
J.~Wang, T.~Jebara, and S.F. Chang.
\newblock Graph transduction via alternating minimization.
\newblock In \emph{ICML}, 2008.

\bibitem[von Luxburg(2007)]{Luxburg07}
U.~von Luxburg.
\newblock A tutorial on spectral clustering.
\newblock \emph{Statistics and Computing}, 17\penalty0 (4):\penalty0 395--416,
  2007.

\bibitem[Jebara and Shchogolev(2006)]{JebShc06}
T.~Jebara and V.~Shchogolev.
\newblock B-matching for spectral clustering.
\newblock In \emph{ECML}, 2006.

\bibitem[Galbiati(2011)]{Galbiati11}
Giulia Galbiati.
\newblock Approximating minimum cut with bounded size.
\newblock In \emph{INOC'11}, pages 210--215, 2011.

\bibitem[Ji(2004)]{Ji04}
X.~Ji.
\newblock \emph{Graph Partition Problems with Minimum Size Constraints}.
\newblock PhD thesis, Rensselaer Polytechnic Institute, 2004.

\bibitem[Maier et~al.(2008{\natexlab{a}})Maier, von Luxburg, and Hein]{Maier1}
M.~Maier, U.~von Luxburg, and M.~Hein.
\newblock Influence of graph construction on graph-based clustering.
\newblock In \emph{NIPS 21}, pages 1025--1032. MIT Press, 2008{\natexlab{a}}.

\bibitem[Jebara et~al.(2009)Jebara, Wang, and Chang]{JebWanCha09}
T.~Jebara, J.~Wang, and S.F. Chang.
\newblock Graph construction and b-matching for semi-supervised learning.
\newblock In \emph{ICML}, 2009.

\bibitem[Zelnik-Manor and Perona(2004)]{Zelnik04}
L.~Zelnik-Manor and P.~Perona.
\newblock Self-tuning spectral clustering.
\newblock In \emph{NIPS 17}, 2004.

\bibitem[Nadler and Galun(2006)]{Nadler07}
B.~Nadler and M.~Galun.
\newblock Fundamental limitations of spectral clustering.
\newblock In \emph{NIPS 19}, pages 1017--1024. MIT Press, 2006.

\bibitem[Buhler and Hein(2009)]{Buhler09}
T.~Buhler and M.~Hein.
\newblock Spectral clustering based on the graph p-laplacian.
\newblock In \emph{ICML}, 2009.

\bibitem[Shi et~al.(2009)Shi, Belkin, and Yu]{ShiBelkinYu09}
T.~Shi, M.~Belkin, and B.~Yu.
\newblock Data spectroscopy: Eigenspaces of convolution operators and
  clustering.
\newblock In \emph{Ann. Statist.}, 2009.

\bibitem[Simon and Teng(1997)]{ST97}
H.D. Simon and S.H. Teng.
\newblock How good is recursive bisection?
\newblock In \emph{SIAM J. Sci. Comput.}, 1997.

\bibitem[Feige and Yahalom(2003)]{Feige03}
U.~Feige and O.~Yahalom.
\newblock On the complexity of finding balanced oneway cuts.
\newblock In \emph{Information Processing Letters}, volume~87, pages 1--5,
  2003.

\bibitem[Andreev and Racke(2004)]{Andreev04}
K.~Andreev and H.~Racke.
\newblock Balanced graph partitioning.
\newblock In \emph{ACM SPAA}, 2004.

\bibitem[Hoppner and Klawonn(2008)]{Hoppner08}
F.~Hoppner and F.~Klawonn.
\newblock Clustering with size constraints.
\newblock In \emph{Computational Intelligence Paradigms}, 2008.

\bibitem[Zhu et~al.(2010)Zhu, Wang, and Li]{ZhuWangLi10}
S.~Zhu, D.~Wang, and T.~Li.
\newblock Data clustering with size constraints.
\newblock In \emph{J. Knowledge-Based Systems}, 2010.

\bibitem[K.~Nagano and Aihara(2011)]{Nagano11}
Y.~Kawahara K.~Nagano and K.~Aihara.
\newblock Size-constrained submodular minimization through minimum norm base.
\newblock In \emph{ICML}, 2011.

\bibitem[Stoer and Wagner(1997)]{Stoer97}
M.~Stoer and F.~Wagner.
\newblock A simple min-cut algorithm.
\newblock In \emph{J. ACM}, 1997.

\bibitem[Narayanan et~al.(2006)Narayanan, Belkin, and Niyogi]{Narayanan06}
H.~Narayanan, M.~Belkin, and P.~Niyogi.
\newblock On the relation between low density separation, spectral clustering
  and graph cuts.
\newblock In \emph{NIPS 19}, pages 1025--1032. MIT Press, 2006.

\bibitem[Chung(1996)]{Chung96}
F.~Chung.
\newblock \emph{Spectral graph theory}.
\newblock American Mathematical Society, 1996.

\bibitem[Hein and Buhler(2010)]{Hein10}
M.~Hein and T.~Buhler.
\newblock An inverse power method for nonlinear eigenproblems with applications
  in 1-spectral clustering and sparse pca.
\newblock In \emph{NIPS}, 2010.

\bibitem[Szlam and Bresson(2010)]{Szlam10}
A.~Szlam and X.~Bresson.
\newblock Total variation and cheeger cuts.
\newblock In \emph{ICML}, 2010.

\bibitem[S.~Arora and Vazirani(2009)]{Arora09}
S.~Rao S.~Arora and U.~Vazirani.
\newblock Expander flows, geometric embeddings and graph partitioning.
\newblock In \emph{Journal of the ACM}, volume~56, 2009.

\bibitem[Xu(2005)]{xu05}
R.~Xu.
\newblock Survey of clustering algorithms.
\newblock In \emph{IEEE Trans. on Neural Networks, Vol.16, No.3}, 2005.

\bibitem[Frank and Asuncion(2010)]{uci10}
A.~Frank and A.~Asuncion.
\newblock {UCI} machine learning repository, 2010.
\newblock URL \url{http://archive.ics.uci.edu/ml}.

\bibitem[Shah and Zaman(2010)]{Shah10}
D.~Shah and T.~Zaman.
\newblock Community detection in networks: The leader-follower algorithm.
\newblock In \emph{NIPS 23}, 2010.

\bibitem[Maier et~al.(2008{\natexlab{b}})Maier, von Luxburg, and Hein]{Maier2}
M.~Maier, U.~von Luxburg, and M.~Hein.
\newblock Supplementary materia to {NIPS} 2008 paper ``influence of graph
  construction on graph-based clustering'', 2008{\natexlab{b}}.

\end{thebibliography}

\newpage

\section*{Appendix: Proofs of Theorems}
For ease of development, let $n=m_1(m_2+1)$, and divide $n$ data points into: $D=D_0 \bigcup  D_1 \bigcup ... \bigcup D_{m_1}$, where $D_0=\{x_1,...,x_{m_1}\}$, and each $D_j, j=1,...,m_1$ involves $m_2$ points. $D_j$ is used to generate the statistic $\eta$ for $u$ and $x_j\in D_0$, for $j=1,...,m_1$. $D_0$ is used to compute the rank of $u$:
\begin{equation}
    R(u) = \frac{1}{m_1}\sum_{j=1}^{m_1} \mathbb{I}_{\{ \eta(x_j;D_j)>\eta(u;D_j) \}}
\end{equation}
We provide the proof for the statistic $\eta(u)$ of the following form (here $l$ is used in place of $k_0$):
\begin{eqnarray}
  \eta(u;D_j) &=& \frac{1}{l}\sum^{l+\lfloor \frac{l}{2} \rfloor}_{i=l-\lfloor \frac{l-1}{2} \rfloor}\left( \frac{l}{i} \right)^{\frac{1}{d}}D_{(i)}(u).
\end{eqnarray}
where $D_{(i)}(u)$ denotes the distance from $u$ to its $i$-th nearest neighbor among $m_2$ points in $D_j$. Practically we can omit the weight and use the average of 1-st to $l$-st nearest neighbor distances as shown in Sec.\ref{sec:RMD_idea}.

\noindent\textbf{Proof of Theorem \ref{rank-pvalue}:}
\begin{proof}
The proof involves two steps:
\begin{itemize}
  \item[1.] The expectation of the empirical rank $\mathbb{E}\left[R(u)\right]$ is shown to converge to $p(u)$ as $n\rightarrow\infty$.
  \item[2.] The empirical rank $R(u)$ is shown to concentrate at its expectation as $n\rightarrow\infty$.
\end{itemize}
The first step is shown through Lemma \ref{lem:expectation}. For the second step, notice that the rank $R(u) = \frac{1}{m_1}\sum_{j=1}^{m_1} Y_j$, where $Y_j = \mathbb{I}_{\{ \eta(x_j;D_j)>\eta(u;D_j) \}}$ is independent across different $j$'s, and $Y_j \in [0,1]$. By Hoeffding's inequality, we have:
\begin{equation}
    \mathbb{P}\left( | R(u) - \mathbb{E}\left[R(u)\right] | > \epsilon \right) < 2\exp\left( -2m_1\epsilon^2 \right)
\end{equation}
Combining these two steps finishes the proof.
\end{proof}

\noindent\textbf{Proof of Theorem \ref{part2}:}
\begin{proof}
We want to establish the convergence result of the cut term and the balancing terms respectively, that is:
\begin{eqnarray}
    &\frac{1}{nk_n}\sqrt[d]{\frac{n}{k_n}}cut_n(S)
    \rightarrow C_d\int_S{f^{1-\frac{1}{d}}(s)\rho(s)^{1+\frac{1}{d}}ds}. \label{eq:term1}\\
    &n\frac{1}{|V^\pm|}\rightarrow
    \frac{1}{\mu(C^\pm)}. \label{eq:term2R}  \\
    &nk_n\frac{1}{vol(V^\pm)}\rightarrow
    \frac{1}{\mu(C^\pm)}. \label{eq:term2N}
    %n\frac{1}{|V^\pm|} &\rightarrow \frac{1}{2\mu(C^{\pm})} \label{eq:term3}
\end{eqnarray}
where $V^+(V^-)=\{x\in{V}: x\in{C^+}(C^-)\}$ are the discrete version of $C^+(C^-)$.

The balancing terms Eq.(\ref{eq:term2R},\ref{eq:term2N}) are obtained similarly using Chernoff bound on the sum of binomial random variables, since the number of points in $V^\pm$ is binomially distributed $Binom(n,\mu(C^\pm))$. Details can be found in \citet{Maier1}.

Eq.(\ref{eq:term1}) is established in two steps. First we can show that the LHS cut term converges to its expectation $\mathbb{E}\left(\frac{1}{nk_n}\sqrt[d]{\frac{n}{k_n}}cut_n(S)\right)$ by McDiarmid's inequality. Second we show this expectation term actually converges to the RHS of Eq.(\ref{eq:term1}). This is shown in Lemma~\ref{expectation}.

%To establish \ref{eq:term3}, the idea is that  the number of points in $V^+$ is binomially distributed $\text{Binom}(n,\mu(C^+))$. Using the Chernoff bound of binomial sum we can show that almost surely Equation \ref{eq:term3} holds true.
\end{proof}

\begin{lemma}\label{expectation}
Given the assumptions of Theorem 2,
\begin{equation}
    \mathbb{E}\left(\frac{1}{nk_n}\sqrt[d]{\frac{n}{k_n}}cut_n(S)\right)\longrightarrow C_d\int_S{f^{1-\frac{1}{d}}(s)\rho(s)^{1+\frac{1}{d}}ds}.
\end{equation}
where $C_d=\frac{2\eta_{d-1}}{(d+1)\eta_d^{1+1/d}}$.
\end{lemma}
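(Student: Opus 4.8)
The plan is to compute $\mathbb{E}\bigl[cut_n(S)\bigr]$ by a localization-and-foliation argument and to read the extra $\rho$-factor off the modulated degrees. First I would record the two asymptotic facts that drive everything. By Theorem~\ref{rank-pvalue} the empirical rank obeys $R(x_u)\to p(u)$ with exponential concentration, so the (random) degree satisfies $k_\lambda(u)=k_n\bigl(\lambda+2(1-\lambda)R(x_u)\bigr)=k_n\,\rho(u)\,(1+o_{\mathbb{P}}(1))$ with $\rho(x)=\lambda+2(1-\lambda)p(x)$. Second, under the regularity conditions on $f$, the general assumptions of \citet{Maier1}, and the growth conditions $k_n/n\to0$, $k_n/\log n\to\infty$ (resp.\ $k_n/\sqrt n\to\infty$ for $d=1$), the $k_\lambda(u)$-th nearest-neighbour distance of a point $u$ concentrates around
\[ r_n(u) := \left( \frac{k_\lambda(u)}{n\,\eta_d\,f(u)} \right)^{1/d} = \left( \frac{k_n\,\rho(u)}{n\,\eta_d\,f(u)} \right)^{1/d}\!\bigl(1+o_{\mathbb{P}}(1)\bigr), \]
the usual $k$-NN radius estimate, now carrying the factor $\rho(u)^{1/d}$.

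Next I would set up the tube integral. A graph edge crosses $S$ only if one of its endpoints lies within $\max_u r_n(u)=O\bigl((k_n/n)^{1/d}\bigr)\to0$ of $S$, so it suffices to integrate over a shrinking tube around $S$. Foliating the tube as $s+t\nu$ with $s\in S$ and conditioning on a node $u$ at signed distance $t$ (projection $s$), the expected number of its neighbours on the far side of $S$ is, to leading order, $k_\lambda(u)\,\phi\!\bigl(t/r_n(u)\bigr)\,\mathbb{I}\{|t|<r_n(u)\}$, where $\phi(\beta)$ is the normalized volume of the cap of the unit ball cut off by a hyperplane at distance $\beta$ from its centre (here I use that $f$ and $\rho$ vary negligibly over a ball of radius $r_n$, by the smoothness and no-flat-region assumptions). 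Summing over the nodes and accounting for the symmetrization convention of the graph turns the expected cut into $\int_S\!\int n f(s)\,k_n\rho(s)\,\phi(t/r_n(s))\,\mathbb{I}\{|t|<r_n(s)\}\,dt\,ds\,(1+o(1))$; substituting $t=r_n(s)\beta$ yields a factor $r_n(s)\int_0^1\phi(\beta)\,d\beta$, and inserting $r_n(s)=\bigl(k_n\rho(s)/(n\eta_d f(s))\bigr)^{1/d}$ and collecting the powers of $n$ and $k_n$ gives
\[ \mathbb{E}\bigl[cut_n(S)\bigr] = n\,k_n \left( \frac{k_n}{n} \right)^{1/d}\! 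C_d \int_S f^{1-\frac1d}(s)\,\rho^{1+\frac1d}(s)\,ds\,\bigl(1+o(1)\bigr), \]
where $C_d=\frac{2\eta_{d-1}}{(d+1)\eta_d^{1+1/d}}$ comes out of the ball-cap integral exactly as in \citet{Maier1}. Multiplying by $\frac{1}{nk_n}\sqrt[d]{n/k_n}$ yields the lemma. Conceptually, the novel factor $\rho^{1+1/d}$ is the product of one power of $\rho$ (each modulated node has $\approx k_n\rho$ neighbours) and a power $\rho^{1/d}$ (a node of degree $k_n\rho$ sweeps a tube of thickness $\propto\rho^{1/d}$).

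The main obstacle is making the first step rigorous and \emph{uniform}: the degree $k_\lambda(u)$ is itself a function of the sample (through the empirical rank, which is computed from the data), so one must control the \emph{joint} behaviour of $R(x_u)$ and the local nearest-neighbour distances and show that replacing $k_\lambda(u)$ and $R_{k_\lambda(u)}(u)$ by the deterministic surrogates $k_n\rho(u)$ and $r_n(u)$ changes the tube integral only by a $1+o(1)$ factor, uniformly over the (shrinking) tube. This amounts to threading the concentration of Theorem~\ref{rank-pvalue} through the $k$-NN distance asymptotics, together with the standard \citet{Maier1}-style corrections: the non-constancy of $f$ and $\rho$ across a ball of radius $r_n$ (bounded via $\|\nabla f\|\le\lambda$ and the no-flat-region condition, hence lower order) and the boundary effects where $S$ meets the boundary of the support of $f$. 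None of these obstructions is genuinely new relative to \citet{Maier1}; the only new ingredient is the rank-consistency input that produces the extra $\rho$-factor. The concentration of $cut_n(S)$ around the expectation computed here is handled separately by McDiarmid's inequality in the proof of Theorem~\ref{part2} and lies outside the scope of this lemma.
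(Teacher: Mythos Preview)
Your approach is essentially the same as the paper's: both follow the \citet{Maier1} template of writing the expected cut as a tube integral around $S$, replacing the (random) $k_\lambda(u)$-th nearest-neighbour radius by its deterministic surrogate $(k_n\rho(u)/(n\eta_d f(u)))^{1/d}$, foliating along $S$ in the normal direction, and evaluating the spherical-cap volume integral to produce $C_d=\frac{2\eta_{d-1}}{(d+1)\eta_d^{1+1/d}}$. The paper frames the first step slightly differently---it introduces a per-node cut function $cut_{x_i}$, writes $\mathbb{E}(cut_{x_i})=(n-1)\int_0^\infty g(x_i,r)\,dF_{R_{x_i}^k}(r)$, and then invokes concentration of the $k$-NN distance to replace the CDF integral by evaluation at the mean---but this is exactly your ``deterministic surrogate'' step in different clothing, and the subsequent foliation and cap-volume computation are identical to yours.

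One point worth noting: the obstacle you single out---that $k_\lambda(u)$ depends on the sample through the empirical rank, so one needs \emph{uniform} rank consistency to justify replacing $k_\lambda(u)$ by $k_n\rho(u)$ inside the tube integral---is in fact glossed over in the paper's outline, which simply writes ``its $k_n\rho(x_i)$-th nearest neighbor'' as if $\rho$ were given. So your proposal is, if anything, more scrupulous than the paper on this step; your diagnosis that it is handled by threading Theorem~\ref{rank-pvalue} through the \citet{Maier1} machinery is the right one.
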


\begin{proof}
The proof is an extension of \citet{Maier2}. The complete proof is a bit complicated, so we provide an outline here, describing the extension. More details can be found in \citet{Maier1}. The first trick is to define a cut function for a fixed point $x_i\in V^+$, whose expectation is easier to compute:
\begin{eqnarray}
cut_{x_i} = \sum_{v\in V^{-},(x_i,v)\in E}w(x_i,v).
\end{eqnarray}
Similarly, we can define $cut_{x_i}$ for $x_i\in V^-$. The expectation of $cut_{x_i}$ and  $cut_n(S)$ can be related:
\begin{eqnarray}\label{eq:expect}
\mathbb{E}(cut_n(S))=n\mathbb{E}_x(\mathbb{E}(cut_{x}))
\end{eqnarray}
Then the value of $\mathbb{E}(cut_{x_i})$ can be computed as,
\begin{equation}
    (n-1)\int_0^{\infty}{\left[\int_{B(x_i,r)\cap{C^-}}f(y)dy\right]dF_{R_{x_i}^k}(r)}.
\end{equation}
where $r$ is the distance of $x_i$ to its $k_n\rho(x_i)$-th nearest neighbor. The value of $r$ is a random variable and can be characterized by the CDF $F_{R_{x_i}^k}(r)$.
Combining equation \ref{eq:expect} we can write down the whole expected cut value
\begin{eqnarray}
% \nonumber to remove numbering (before each equation)
  \mathbb{E}(cut_n(S)) =n\mathbb{E}_x(\mathbb{E}(cut_{x}))= n\int_{\mathbb{R}^d}f(x)\mathbb{E}(cut_{x})dx \\
   = n(n-1)\int_{\mathbb{R}^d}f(x)\left[\int_0^{\infty}{g(x,r)dF_{R_x^k}(r)}\right]dx.
\end{eqnarray}

To simplify the expression, we use $g(x,r)$ to denote
\begin{equation}
    g(x,r)=\begin{cases}
               \int_{B(x,r)\cap{C^-}}f(y)dy,  x\in{C^+} \\
               \int_{B(x,r)\cap{C^+}}f(y)dy,  x\in{C^-}.
             \end{cases}
\end{equation}

Under general assumptions, when $n$ tends to infinity, the random variable $r$ will highly concentrate around its mean $\mathbb{E}(r_x^k)$.
Furthermore, as $k_n/n\rightarrow{0}$, $\mathbb{E}(r_x^k)$ tends to zero and the speed of convergence
\begin{eqnarray}\label{eq:EkNN}
\mathbb{E}(r_x^k)\approx(k\rho(x)/((n-1)f(x)\eta_d))^{1/d}
\end{eqnarray}
So the inner integral in the cut value can be approximated by $g(x,\mathbb{E}(r_x^k))$, which implies,
\begin{equation}
    \mathbb{E}(cut_n(S))\approx{n}(n-1)\int_{\mathbb{R}^d}f(x)g(x,\mathbb{E}(r_x^k))dx.
\end{equation}

The next trick is to decompose the integral over $\mathbb{R}^d$ into two orthogonal directions, i.e., the direction along the hyperplane $S$ and its normal direction (We use $\overrightarrow{n}$ to denote the unit normal vector):
\begin{equation}
    \int_{\mathbb{R}^d}f(x)g(x,\mathbb{E}(r_x^k))dx= \\
    \int_{S}\int_{-\infty}^{+\infty}f(s+t\overrightarrow{n})g(s+t\overrightarrow{n},\mathbb{E}(r_{s+t\overrightarrow{n}}^k))dtds.
\end{equation}
When $t>\mathbb{E}(r_{s+t\overrightarrow{n}}^k)$, the integral region of $g$ will be empty: $B(x,\mathbb{E}(r_x^k))\cap{C^-}=\emptyset$. On the other hand, when $x=s+t\overrightarrow{n}$ is close to $s\in{S}$, we have the approximation $f(x)\approx{f(s)}$:
\begin{eqnarray}
% \nonumber to remove numbering (before each equation)
  &\int_{-\infty}^{+\infty}f(s+t\overrightarrow{n})g(s+t\overrightarrow{n},\mathbb{E}(r_{s+t\overrightarrow{n}}^k))dt \\
  &\approx 2\int_{0}^{\mathbb{E}(r_{s}^k)}f(s)\left[f(s)vol\left(B(s+t\overrightarrow{n},\mathbb{E}{r_s^k})\cap{C^-}\right)\right]dt  \\
  &= 2f^2(s)\int_{0}^{\mathbb{E}(r_{s}^k)}vol\left(B(s+t\overrightarrow{n},\mathbb{E}(r_s^k))\cap{C^-}\right)dt.
\end{eqnarray}

The term $vol\left(B(s+t\overrightarrow{n},\mathbb{E}(r_s^k))\cap{C^-}\right)$ is the volume of $d$-dim spherical cap of radius $\mathbb{E}(r_s^k))$, which is at distance $t$ to the center. Through direct computation we obtain:
\begin{equation}
    \int_{0}^{\mathbb{E}(r_{s}^k)}vol\left(B(s+t\overrightarrow{n},\mathbb{E}(r_s^k))\cap{C^-}\right)dt=\mathbb{E}(r_s^k)^{d+1}\frac{\eta_{d-1}}{d+1}.
\end{equation}
Combining the above step and plugging in the approximation of $\mathbb{E}(r_s^k)$ in Eq.(\ref{eq:EkNN}), we finish the proof.
%The $k$-NN radius tends to 0, so for point $x$ and its linked neighbor $y$, $p(x)+p(y)\approx2p(x)$. Decompose the integration over $\mathbb{R}^d$ into two steps, first at point $s$ over $S$ and then along the orthogonal direction at $s$, and insert the approximation of $k$-NN radius at $s$, we can obtain the result.
\end{proof}

\begin{lemma}\label{lem:expectation}
By choosing $l$ properly, as $m_2\rightarrow\infty$, it follows that,
$$ | \mathbb{E}\left[R(u)\right] - p(u)| \longrightarrow 0$$
\end{lemma}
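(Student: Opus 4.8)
The plan is to reduce the claim to a comparison of two concentrated nearest-neighbor statistics. Writing the expectation over the independent randomness in $x_j$ and $D_j$, and using that every summand of $R(u)$ has the same law, $\mathbb{E}[R(u)] = \mathbb{P}\big(\eta(X;D) > \eta(u;D)\big)$, where $X\sim f$ is drawn independently of the $m_2$-point sample $D$. So it suffices to show this probability converges to $\mathbb{P}(f(X)<f(u)) = p(u)$.

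First I would pin down the asymptotics of the order statistics $D_{(i)}(u)$ appearing in $\eta(u;D)$. The number of points of $D$ in the ball $B(u,r)$ is $\mathrm{Binom}\big(m_2, F(B(u,r))\big)$ with $F(B(u,r))=\int_{B(u,r)}f$, and the Lipschitz bound $\|\nabla f\|\le\lambda$ gives $F(B(u,r)) = f(u)\eta_d r^d\,(1+O(r))$ for small $r$; inverting yields $D_{(i)}(u) = \big(i/(m_2 f(u)\eta_d)\big)^{1/d}(1+o(1))$, with a Chernoff bound controlling the deviation uniformly over $i$ in the window $[\,l-\lfloor(l-1)/2\rfloor,\ l+\lfloor l/2\rfloor\,]$. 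The weights $(l/i)^{1/d}$ in the definition of $\eta$ are designed precisely so that $(l/i)^{1/d}D_{(i)}(u) = \big(l/(m_2 f(u)\eta_d)\big)^{1/d}(1+o(1))$ for every $i$ in that window; averaging the $l$ terms gives $\eta(u;D) = \big(l/(m_2 f(u)\eta_d)\big)^{1/d}(1+o(1))$, so $\eta$ concentrates around a deterministic, strictly decreasing function of the local density. The identical computation at the random point $X$ (legitimate since $f\ge f_{min}>0$ on the compact support) gives $\eta(X;D) = \big(l/(m_2 f(X)\eta_d)\big)^{1/d}(1+o(1))$.

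Comparing the two, the event $\{\eta(X;D)>\eta(u;D)\}$ coincides with $\{f(X)<f(u)\}$ except on the event that $f(X)$ lies within $O(\sigma_{m_2})$ of $f(u)$, where $\sigma_{m_2}\to0$ is the common fluctuation scale of the two $\eta$-statistics. The no-flat-regions hypothesis $\mathcal{P}\{y:|f(y)-f(u)|<\sigma\}\le M\sigma$ bounds the probability of this near-tie event by $O(\sigma_{m_2})\to0$, and dominated convergence then gives $\mathbb{P}(\eta(X;D)>\eta(u;D))\to\mathbb{P}(f(X)<f(u))=p(u)$; choosing $l=l(m_2)\to\infty$ with $l/m_2\to0$ (and $l/\log m_2\to\infty$ so the uniform concentration over the window of order statistics holds) makes every error term vanish. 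The main obstacle is the joint bookkeeping of the error budgets: the curvature/bias term $O(r)$ in $F(B(u,r))$ and the order-statistic deviations must be shown to be of smaller order than the density gap $|f(X)^{-1/d}-f(u)^{-1/d}|$ on the bulk of the $X$-distribution, while the residual mass of the near-tie region — of size $M\sigma_{m_2}$ — is simultaneously killed, which forces the choice of $l$ to balance these two scales. Everything else reduces to routine Chernoff-bound estimates.
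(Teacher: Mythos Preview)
Your proposal is correct and follows the same architecture as the paper's proof: reduce $\mathbb{E}[R(u)]$ to $\mathbb{P}\big(\eta(X;D)>\eta(u;D)\big)$ by i.i.d.\ symmetry, show that $\eta(\cdot;D)$ concentrates around a strictly decreasing function of the local density, and use the no-flat-regions assumption to kill the near-tie set $\{x:|f(x)-f(u)|<\sigma_{m_2}\}$.

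The one substantive difference is in the concentration step. The paper applies McDiarmid's inequality to the \emph{difference} $F_x=\eta(x;D)-\eta(u;D)$ as a function of the $m_2$ sample points (bounded differences $\le 4C/l$), obtaining $\mathbb{P}(F_x>0)\approx\mathbb{I}_{\{\mathbb{E}F_x>0\}}$, and then controls $\mathbb{E}\eta(\cdot)$ through a separate lemma on the mean of the $l$-NN distance. You instead concentrate each order statistic $D_{(i)}$ directly via a binomial Chernoff bound and let the weights $(l/i)^{1/d}$ align the terms before averaging. Both routes work; yours is arguably more transparent about why the particular reweighting in $\eta$ is there, while the paper's McDiarmid-plus-expectation split is slightly cleaner bookkeeping (one concentration bound instead of a union over the window of order statistics). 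Note that the paper's balancing of bias versus fluctuation forces a polynomial rate $l=m_2^{\alpha}$ with $\alpha>(d+4)/(2d+4)$; your stated condition $l/\log m_2\to\infty$ is what the Chernoff route naturally suggests, but you should double-check that it is actually enough once the smoothness bias $O\big((l/m_2)^{1/d}\big)$ is matched against the near-tie mass $M\sigma_{m_2}$ --- you may end up needing a mildly stronger growth on $l$ than logarithmic.
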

\begin{proof}
Take expectation with respect to $D$:
\begin{eqnarray}
\mathbb{E}_D\left[R(u)\right]
&=&\mathbb{E}_{D\backslash D_0}\left[\mathbb{E}_{D_0}\left[\frac{1}{m_1}\sum_{j=1}^{m_1}
 \mathbb{I}_{\{\eta(u;D_j)<\eta(x_j;D_j)\}}\right]\right]\\
&=&\frac{1}{m_1}\sum_{j=1}^{m_1}\mathbb{E}_{x_j}\left[
\mathbb{E}_{D_j}\left[
\mathbb{I}_{\{\eta(u;D_j)<\eta(x_j;D_j)\}}\right]\right]\\
&=&\mathbb{E}_x\left[\mathcal{P}_{D_1}\left(\eta(u;D_1)<\eta(x;D_1)\right)\right]
\end{eqnarray}
The last equality holds due to the i.i.d symmetry of $\{x_1,...,x_{m_1}\}$ and $D_1,...,D_{m_1}$. We fix both $u$ and $x$ and temporarily discarding $\mathbb{E}_{D_1}$. Let $F_x(y_1,...,y_{m_2})=\eta(x)-\eta(u)$, where $y_1,...,y_{m_2}$ are the $m_2$ points in $D_1$. It follows:
\begin{equation}
    \mathcal{P}_{D_1}\left(\eta(u)<\eta(x)\right)
    =\mathcal{P}_{D_1}\left(F_x(y_1,...,y_{m_2})>0\right)
    =\mathcal{P}_{D_1}\left(F_x-\mathbb{E}F_x>-\mathbb{E}F_x\right).
\end{equation}

To check McDiarmid's requirements, we replace $y_j$ with $y_j'$. It is easily verified that $\forall j=1,...,m_2$,
\begin{equation}\label{equ:mcdiarmid_condition}
    |F_x(y_1,...,y_{m_2})-F_x(y_1,...,y_j',...,y_{m_2})| \leq 2^{\frac{1}{d}}\frac{2C}{l} \leq \frac{4C}{l}
\end{equation}
where $C$ is the diameter of support. Notice despite the fact that $y_1,...,y_{m_2}$ are random vectors we can still apply MeDiarmid's inequality, because according to the form of $\eta$, $F_x(y_1,...,y_{m_2})$ is a function of $m_2$ i.i.d random variables $r_1,...,r_{m_2}$ where $r_i$ is the distance from $x$ to $y_i$.
Therefore if $\mathbb{E}F_x<0$, or $\mathbb{E}\eta(x)<\mathbb{E}\eta(u)$, we have by McDiarmid's inequality,
\begin{equation}
    \mathcal{P}_{D_1}\left(\eta(u)<\eta(x)\right)
    = \mathcal{P}_{D_1}\left( F_x > 0 \right)
    = \mathcal{P}_{D_1}\left( F_x-\mathbb{E}F_x>-\mathbb{E}F_x \right)
    \leq \exp\left(-\frac{(\mathbb{E}F_x)^2 l^2}{8C^2m_2}\right)
\end{equation}
Rewrite the above inequality as:
\begin{equation}\label{equ:bound_no_expectation}
    \mathbb{I}_{\{\mathbb{E}F_x>0\}}-e^{-\frac{(\mathbb{E}F_x)^2 l^2}{8C^2m_2}}
    \leq \mathcal{P}_{D_1}\left( F_x > 0 \right)
    \leq \mathbb{I}_{\{\mathbb{E}F_x>0\}}+e^{-\frac{(\mathbb{E}F_x)^2 l^2}{8C^2m_2}}
\end{equation}
It can be shown that the same inequality holds for $\mathbb{E}F_x>0$, or $\mathbb{E}\eta(x)>\mathbb{E}\eta(u)$. Now we take expectation with respect to $x$:
\begin{equation}\label{equ:bound_with_expectation}
    \mathcal{P}_x\left(\mathbb{E}F_x>0\right)-\mathbb{E}_x\left[e^{-\frac{(\mathbb{E}F_x)^2 l^2}{8C^2m_2}}\right] \leq
    \mathbb{E}\left[\mathcal{P}_{D_1}\left( F_x > 0 \right)\right] \leq \mathcal{P}_x\left(\mathbb{E}F_x>0\right)+\mathbb{E}_x\left[e^{-\frac{(\mathbb{E}F_x)^2 l^2}{8C^2m_2}}\right]
\end{equation}
Divide the support of $x$ into two parts, $\mathbb{X}_1$ and $\mathbb{X}_2$, where $\mathbb{X}_1$ contains those $x$ whose density $f(x)$ is relatively far away from $f(u)$, and $\mathbb{X}_2$ contains those $x$ whose density is close to $f(u)$. We show for $x \in \mathbb{X}_1$, the above exponential term converges to 0 and $\mathcal{P}\left(\mathbb{E}F_x>0\right) = \mathcal{P}_x\left( f(u)>f(x) \right)$, while the rest $x\in\mathbb{X}_2$ has very small measure. Let $A(x)=\left(\frac{k}{f(x) c_d m_2}\right)^{1/d}$. By Lemma \ref{lem:bound_expectation} we have:
\begin{equation}
    | \mathbb{E}\eta(x) - A(x) | \leq \gamma \left(\frac{l}{m_2}\right)^{\frac{1}{d}} A(x)
    \leq \gamma \left(\frac{l}{m_2}\right)^{\frac{1}{d}} \left(\frac{l}{f_{min}c_d m_2}\right)^{\frac{1}{d}}
    =    \left(\frac{\gamma_1}{c_d^{1/d}}\right) \left(\frac{l}{m_2}\right)^{\frac{2}{d}}
\end{equation}
where $\gamma$ denotes the big $O(\cdot)$, and $\gamma_1 = \gamma \left(\frac{1}{f_{min}}\right)^{1/d}$. Applying uniform bound we have:
\begin{equation}
    A(x)-A(u)- 2\left(\frac{\gamma_1}{c_d^{1/d}}\right) \left(\frac{l}{m_2}\right)^{\frac{2}{d}}
    \leq \mathbb{E}\left[\eta(x) - \eta(u)\right]
    \leq A(x)-A(u)+ 2\left(\frac{\gamma_1}{c_d^{1/d}}\right) \left(\frac{l}{m_2}\right)^{\frac{2}{d}}
\end{equation}
Now let $\mathbb{X}_1=\{ x:|f(x)-f(u)|\geq 3\gamma_1 d f_{min}^{\frac{d+1}{d}} \left(\frac{l}{m_2}\right)^{\frac{1}{d}} \}$. For $x\in \mathbb{X}_1$, it can be verified that $|A(x)-A(u)|\geq 3\left(\frac{\gamma_1}{c_d^{1/d}}\right) \left(\frac{l}{m_2}\right)^{\frac{2}{d}}$, or $|\mathbb{E}\left[\eta(x) - \eta(u)\right]| > \left(\frac{\gamma_1}{c_d^{1/d}}\right) \left(\frac{l}{m_2}\right)^{\frac{2}{d}}$, and $\mathbb{I}_{\{f(u)>f(x)\}}=\mathbb{I}_{\{\mathbb{E}\eta(x)>\mathbb{E}\eta(u)\}}$. For the exponential term in Equ.(\ref{equ:bound_no_expectation}) we have:
\begin{equation}
    \exp\left(-\frac{(\mathbb{E}F_x)^2 l^2}{2C^2m_2}\right)
    \leq \exp\left(-\frac{ \gamma_1^2 l^{2+\frac{4}{d}} }{ 8C^2 c_d^{\frac{2}{d}} m_2^{1+\frac{4}{d}} } \right)
\end{equation}
For $x\in \mathbb{X}_2=\{x:|f(x)-f(u)|< 3\gamma_1 d \left(\frac{l}{m_2}\right)^{\frac{1}{d}}f_{min}^{\frac{d+1}{d}} \}$, by the regularity assumption, we have $\mathcal{P}(\mathbb{X}_2)<3M\gamma_1 d \left(\frac{l}{m_2}\right)^{\frac{1}{d}}f_{min}^{\frac{d+1}{d}}$. Combining the two cases into Equ.(\ref{equ:bound_with_expectation}) we have for upper bound:
\begin{eqnarray}
% \nonumber to remove numbering (before each equation)
  \mathbb{E}_D\left[R(u)\right]
  &=& \mathbb{E}_x\left[\mathcal{P}_{D_1}\left(\eta(u)<\eta(x)\right)\right] \\
  &=& \int_{\mathbb{X}_1}\mathcal{P}_{D_1}\left(\eta(u)<\eta(x)\right)f(x)dx +  \int_{\mathbb{X}_2}\mathcal{P}_{D_1}\left(\eta(u)<\eta(x)\right)f(x)dx \\
  &\leq& \left( \mathcal{P}_x\left(f(u)>f(x)\right) + \exp\left(-\frac{ \gamma_1^2 l^{2+\frac{4}{d}} }{ 8C^2 c_d^{\frac{1}{d}} m_2^{1+\frac{4}{d}} } \right) \right)\mathcal{P}(x\in \mathbb{X}_1) + \mathcal{P}(x\in \mathbb{X}_2) \\
  &\leq&  \mathcal{P}_x\left(f(u)>f(x)\right) + \exp\left(-\frac{ \gamma_1^2 l^{2+\frac{4}{d}} }{ 8C^2 c_d^{\frac{1}{d}} m_2^{1+\frac{4}{d}} } \right) + 3M\gamma_1 d f_{min}^{\frac{d+1}{d}} \left(\frac{l}{m_2}\right)^{\frac{1}{d}}
\end{eqnarray}
Let $l=m_2^\alpha$ such that $\frac{d+4}{2d+4}<\alpha<1$, and the latter two terms will converge to 0 as $m_2 \rightarrow \infty$. Similar lines hold for the lower bound. The proof is finished.
\end{proof}

\begin{lemma}\label{lem:bound_expectation}
Let $A(x)=\left(\frac{l}{m c_d f(x)}\right)^{1/d}$, $\lambda_1 = \frac{\lambda}{f_{min}}\left(\frac{1.5}{c_d f_{min}}\right)^{1/d}$. By choosing $l$ appropriately, the expectation of $l$-NN distance $\mathbb{E}D_{(l)}(x)$ among $m$ points satisfies:
\begin{equation}
    | \mathbb{E}D_{(l)}(x) - A(x) | = O\left( A(x) \lambda_1 \left(\frac{l}{m}\right)^{1/d} \right)
\end{equation}
\end{lemma}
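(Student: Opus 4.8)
The plan is to reduce the $l$-nearest-neighbor distance at $x$ to a binomial tail and then recover $\mathbb{E}D_{(l)}(x)$ by integrating its survival function. Fix $x$ in the interior of the support and, for $r>0$, set $p(r)=\int_{B(x,r)}f(y)\,dy$, so the number of the $m$ i.i.d.\ sample points lying in $B(x,r)$ is $\mathrm{Binom}(m,p(r))$ and $\{D_{(l)}(x)>r\}=\{\,\#\{i:\|y_i-x\|\le r\}<l\,\}$. Since $D_{(l)}(x)\le C$ (the diameter of the support) almost surely, $\mathbb{E}D_{(l)}(x)=\int_0^{C}\mathcal{P}(D_{(l)}(x)>t)\,dt$, so everything reduces to locating the window in which $\mathcal{P}(D_{(l)}(x)>t)$ drops from near $1$ to near $0$.

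That window is centered at $A(x)$, and its half-width is precisely the error term to be established. The smoothness hypothesis $\|\nabla f\|\le\lambda$ gives, whenever $B(x,r)$ lies inside the support, the two-sided estimate $|p(r)-f(x)c_d r^{d}|\le\lambda c_d r^{d+1}$. Put $v=\lambda_1(l/m)^{1/d}$ and $r_\pm=A(x)(1\pm C_0 v)$ for a dimensional constant $C_0$ (e.g.\ $C_0=2/d$). Using $A(x)^{d}=l/(m c_d f(x))$ one checks $m c_d r_\pm^{d}=l(1\pm C_0 v)^{d}/f(x)$, while $\lambda r_\pm/f(x)\le(\lambda/f_{min})A(x)(1+o(1))=1.5^{-1/d}v(1+o(1))\le v$ for $m$ large; this is the step that calibrates the constant hidden in $\lambda_1$. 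A short computation then yields $m\,p(r_+)\ge l(1+c_1 v)$ and $m\,p(r_-)\le l(1-c_1 v)$ for some $c_1=c_1(d)>0$: the deterministic perturbation of the radius produces a relative drift of order $v$ in the expected count that dominates the $\lambda$-induced distortion of $p$.

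Next I would apply the multiplicative Chernoff bound to these binomial counts: $\mathcal{P}(D_{(l)}(x)>r_+)=\mathcal{P}(\mathrm{Binom}(m,p(r_+))<l)\le e^{-c_2 v^{2} l}$ and, symmetrically, $\mathcal{P}(D_{(l)}(x)\le r_-)=\mathcal{P}(\mathrm{Binom}(m,p(r_-))\ge l)\le e^{-c_2 v^{2} l}$ with $c_2=c_2(d)>0$. Splitting $\int_0^C\mathcal{P}(D_{(l)}(x)>t)\,dt$ at $r_-$ and $r_+$: on $[0,r_-]$ the integrand is $1-\mathcal{P}(D_{(l)}(x)\le t)$, so this piece equals $r_-$ minus a term bounded by $r_- e^{-c_2 v^{2} l}$; on $[r_-,r_+]$ the integrand is at most $1$, contributing at most $r_+-r_-=2C_0 A(x)v$; on $[r_+,C]$ the integrand is at most $\mathcal{P}(D_{(l)}(x)>r_+)$, contributing at most $C e^{-c_2 v^{2} l}$. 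Choosing $l$ to grow with $m$ so that $v^{2}l=\lambda_1^{2}(l/m)^{2/d}l\to\infty$ — which holds for $l=m^{\alpha}$ with $\alpha>2/(d+2)$, and a fortiori for the range $\tfrac{d+4}{2d+4}<\alpha<1$ fixed in Lemma~\ref{lem:expectation} — forces the two exponential terms to be $o(A(x)v)$. Collecting the three pieces and using $r_-=A(x)+O(A(x)v)$ yields $\mathbb{E}D_{(l)}(x)=A(x)+O(A(x)v)$, which is exactly the assertion.

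The main obstacle is the support boundary: the volume asymptotics $p(r)\approx f(x)c_d r^{d}$ require $B(x,r)\subseteq\mathrm{supp}(f)$, which fails for $x$ near $\partial\,\mathrm{supp}(f)$. For interior $x$ this is harmless since $r_\pm\to 0$; in general one either restricts to interior points (implicit in how this lemma is invoked) or imposes mild regularity of $\partial\,\mathrm{supp}(f)$, after which the spherical cap protruding from the support has lower-order volume and is absorbed into the constants, using $f\ge f_{min}>0$. A secondary point to monitor is the bias--variance balance: $D_{(l)}(x)$ fluctuates around $A(x)$ on the scale $A(x)/\sqrt{l}$, so $l$ must be large enough that this stochastic spread is dominated by the $\lambda$-driven bias $A(x)v$; the growth requirement $v^{2}l\to\infty$ is precisely what guarantees this, and it is compatible with the constraints $l\to\infty$, $l/m\to0$ needed elsewhere.
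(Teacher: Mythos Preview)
Your proposal is correct and follows essentially the same route as the paper: both reduce $D_{(l)}(x)$ to a binomial tail via Chernoff, use the smoothness bound $|p(r)-f(x)c_dr^{d}|\le \lambda c_dr^{d+1}$, and control $\mathbb{E}D_{(l)}(x)$ by truncating at threshold radii around $A(x)$. The only organizational difference is that the paper introduces a separate slack $\delta_m$ (taken as $m^{-1/4}$, with $l=m^{(3d+8)/(4d+8)}$) and defines the threshold implicitly through the quantile $r(x,(1\pm\delta_m)l/m)$, whereas you fix the radii explicitly as $A(x)(1\pm C_0v)$ with the slack tied directly to the target error $v$---a cosmetic rearrangement that yields the slightly broader range $\alpha>2/(d+2)$ but is otherwise the same argument.
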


\begin{proof}
Denote $r(x,\alpha)=\min\{r:\mathcal{P}\left(B(x,r)\right)\geq \alpha\}$. Let $\delta_m \rightarrow 0$ as $m \rightarrow \infty$, and $0<\delta_{m}<1/2$.
Let $U\sim Bin(m,(1+\delta_m)\frac{l}{m})$ be a binomial random variable, with $\mathbb{E}U = (1+\delta_{m})l$. We have:
\begin{eqnarray}
% \nonumber to remove numbering (before each equation)
  \mathcal{P}\left(D_{(l)}(x)>r(x,(1+\delta_m)\frac{l}{m})\right)
  &=& \mathcal{P}\left(U<l\right) \\
  &=& \mathcal{P}\left(U<\left(1-\frac{\delta_m}{1+\delta_m}\right)(1+\delta_m)l\right) \\
  &\leq& \exp\left(-\frac{\delta_m^2 l}{2(1+\delta_m)}\right)
\end{eqnarray}
The last inequality holds from Chernoff's bound. Abbreviate $r_1 = r(x,(1+\delta_m)\frac{l}{m})$, and $\mathbb{E}D_{(l)}(x)$ can be bounded as:
\begin{eqnarray}
  \mathbb{E}D_{(l)}(x)
  &\leq& r_1\left[1-\mathcal{P}\left(D_{(l)}(x)>r_1\right)\right] + C\mathcal{P}\left(D_{(l)}(x)>r_1\right)  \\
  &\leq& r_1 + C \exp\left(-\frac{\delta_m^2 l}{2(1+\delta_m)}\right)
\end{eqnarray}
where $C$ is the diameter of support. Similarly we can show the lower bound:
\begin{equation}
    \mathbb{E}D_{(l)}(x) \geq r(x,(1-\delta_m)\frac{l}{m}) - C \exp\left(-\frac{\delta_m^2 l}{2(1-\delta_m)}\right)
\end{equation}
Consider the upper bound. We relate $r_1$ with $A(x)$. Notice $\mathcal{P}\left(B(x,r_1)\right)=(1+\delta_m)\frac{l}{m} \geq c_d r_1^d f_{min}$, so a fixed but loose upper bound is $r_1 \leq \left(\frac{(1+\delta_m)l}{c_d f_{min} m}\right)^{1/d} = r_{max}$. Assume $l/m$ is sufficiently small so that $r_1$ is sufficiently small. By the smoothness condition, the density within $B(x,r_1)$ is lower-bounded by $f(x)-\lambda r_1$, so we have:
\begin{eqnarray}
  \mathcal{P}\left(B(x,r_1)\right) &=& (1+\delta_m)\frac{l}{m} \\
  &\geq& c_d r_1^d \left( f(x)-\lambda r_1 \right)\\
  &=& c_d r_1^d f(x)\left( 1-\frac{\lambda}{f(x)}r_1 \right) \\
  &\geq& c_d r_1^d f(x)\left( 1-\frac{\lambda}{f_{min}}r_{max} \right)
\end{eqnarray}
That is:
\begin{equation}
    r_1 \leq A(x)\left( \frac{1+\delta_m}{1-\frac{\lambda}{f_{min}}r_{max}} \right)^{1/d}
\end{equation}
Insert the expression of $r_{max}$ and set $\lambda_1 = \frac{\lambda}{f_{min}}\left(\frac{1.5}{c_d f_{min}}\right)^{1/d}$, we have:
\begin{eqnarray}
% \nonumber to remove numbering (before each equation)
  \mathbb{E}D_{(l)}(x)-A(x) &\leq& A(x)\left( \left(\frac{1+\delta_m}{1-\lambda_1 \left(\frac{l}{m}\right)^{1/d}}\right)^{1/d} -1 \right) + C \exp\left(-\frac{\delta_m^2 l}{2(1+\delta_m)}\right) \\
  &\leq& A(x)\left( \frac{1+\delta_m}{1-\lambda_1 \left(\frac{l}{m}\right)^{1/d}}-1 \right) + C \exp\left(-\frac{\delta_m^2 l}{2(1+\delta_m)}\right) \\
  &=& A(x)\frac{\delta_m + \lambda_1 \left(\frac{l}{m}\right)^{1/d}}{1-\lambda_1\left(\frac{l}{m}\right)^{1/d}} + C \exp\left(-\frac{\delta_m^2 l}{2(1+\delta_m)}\right) \\
  &=& O\left( A(x) \lambda_1 \left(\frac{l}{m}\right)^{1/d} \right)
\end{eqnarray}
The last equality holds if we choose $l=m^{\frac{3d+8}{4d+8}}$ and $\delta_m=m^{-\frac{1}{4}}$. Similar lines follow for the lower bound. Combine these two parts and the proof is finished.

\end{proof}

\end{document}